\documentclass[conference]{IEEEtran}
\IEEEoverridecommandlockouts

\usepackage{cite}
\usepackage{amsmath,amssymb,amsthm,amsfonts, mathabx}
\usepackage{cleveref} 
\usepackage{graphicx}
\usepackage{textcomp}
\usepackage{xcolor}
\def\BibTeX{{\rm B\kern-.05em{\sc i\kern-.025em b}\kern-.08em
    T\kern-.1667em\lower.7ex\hbox{E}\kern-.125emX}}

\usepackage{natbib}
\usepackage{subfig}

\usepackage{url}
\newtheorem{theorem}{Theorem}
\newtheorem{corollary}{Corollary}[theorem]

\newtheorem{definition}{Definition}
\newtheorem{proposition}[theorem]{Proposition}

\usepackage{algorithm}
\usepackage{algpseudocode}

\usepackage{tikz}
\usetikzlibrary{trees}

 \newcommand{\af}[1]{\textcolor{black}{#1}}

\definecolor{tab_blue}{rgb}{0.122,0.467,0.706}
\definecolor{tab_orange}{rgb}{1.0,0.498,0.055}
\definecolor{tab_green}{rgb}{0.173,0.627,0.173}
\definecolor{tab_red}{rgb}{0.839,0.153,0.157}
\definecolor{tab_purple}{rgb}{0.580,0.404,0.741}
\renewcommand{\P}{\mathbb{P}}

\newcommand{\R}{\mathbb{R}}

\newcommand{\RNCM}{R_{\text{NCL}}}
\newcommand{\RC}{R_{\text{CC}}}

\newcommand{\Rsup}{R}
\interfootnotelinepenalty=10000

\usepackage{duckuments}
\usepackage{booktabs}
\usepackage{multirow}
\usepackage{rotating}
\usepackage{arydshln}
\usepackage{amssymb}
\usepackage{enumitem}
\usepackage{wrapfig}
\usepackage{pifont}

\begin{document}

\title{
Integrating uncertainty quantification into randomized smoothing based robustness guarantees
}

\author{\IEEEauthorblockN{Sina Däubener}
\IEEEauthorblockA{
\textit{Ruhr University Bochum}\\
Bochum, Germany \\
sina.daeubener@rub.de} \\
\IEEEauthorblockN{David Krueger}
\IEEEauthorblockA{
\textit{University of Cambridge}\\
Cambridge, England \\
david.scott.krueger@gmail.com}
\and
\IEEEauthorblockN{Kira Maag}
\IEEEauthorblockA{
\textit{Heinrich-Heine-University Düsseldorf}\\
Düsseldorf, Germany \\
kira.maag@hhu.de} \\
\IEEEauthorblockN{Asja Fischer}
\IEEEauthorblockA{
\textit{Ruhr University Bochum}\\
Bochum, Germany \\
asja.fischer@rub.de}
}

\maketitle

\begin{abstract}
Deep neural networks have proven to be extremely powerful, 
however, they are also vulnerable to adversarial attacks which can cause hazardous incorrect predictions in safety-critical applications.
Certified robustness via randomized smoothing gives a probabilistic guarantee that the smoothed classifier’s predictions will not change within an $\ell_2$-ball around a given input. 
On the other hand (uncertainty) score-based rejection is a technique often applied in practice to defend models against adversarial attacks. 
In this work, we fuse these two approaches by integrating a 
classifier that abstains from predicting when uncertainty is high into the certified robustness framework. 
This allows us to derive two novel robustness guarantees for uncertainty aware classifiers, namely (i) the radius of an $\ell_2$-ball around the input in which the same label is predicted and uncertainty remains low and (ii) the $\ell_2$-radius of a ball in which the predictions will either not change or be uncertain. 
While the former provides robustness guarantees with respect to attacks aiming at increased uncertainty, the latter informs about the amount of input perturbation necessary to lead the uncertainty aware model into a wrong prediction. Notably, this is on CIFAR10 up to 20.93\% larger than for models not allowing for uncertainty based rejection. We demonstrate, that the novel framework allows for a systematic robustness evaluation  of different network architectures and uncertainty measures and  to identify desired properties of uncertainty quantification techniques. Moreover, we show that leveraging uncertainty in a smoothed classifier helps out-of-distribution detection.
\end{abstract}

\section{Introduction}
\label{sec:intro}
In recent years, deep neural networks have demonstrated outstanding performance in a broad range of tasks such as image classification, speech recognition, and semantic segmentation~\citep{vit_model,Xu2023}. However, they have been consistently found to be vulnerable to small perturbations of the input, enabling the crafting of so-called adversarial examples~\citep{Szegedy_advattacks} which cause incorrect predictions.
These perturbations, which are designed to be not perceptible to humans, are hazardous in safety-critical applications, like automated driving or medical diagnosis, ultimately decreasing the trust in the model's predictions~\citep{Daza2021,carlini2023certified}. 

The potential threat that non-robust models pose to the deployment of machine learning systems in real-world scenarios has led to a huge amount of research on increasing adversarial robustness~\citep[e.g.][]{Klingner2020,hendrycks2019robustness} on the one hand and quantifying prediction uncertainty on the other~\citep[e.g.][]{Mackay1992,Gal2016,deep_ensemble}. 
While uncertainty quantification
increases the reliability of a model's predictions through confidence scores, works on improving adversarial robustness try to prevent
successful adversarial attacks. 
Naturally, there is a significant overlap 
between these research fields, where approaches try to 
detect adversarial examples based on uncertainty~\citep{Feinman2017,Maag2024} or leverage uncertainty to abstain from making a prediction~\citep{towards_robust_detection,Stutz_CCAT}. 
Unfortunately though, the robustness resulting from those approaches has to be evaluated empirically. The downside of such empirical robustness estimates is, that 
it is unclear if models remain robust against stronger attacks that evolve from the ever ongoing arms race between newly developed adversarial defenses and attacks.  
This question can be circumvented by certified robustness approaches~\citep[e.g.][]{cohen_adv_robustness, gowal_IBP, levinde_l1, NEURIPS2019_certified, SP10179303},
which provide mathematical guarantees that for a certain region around the input the predicted class label will not change.
So far however, such guarantees where not defined explicitly for uncertainty quantifying models.
To close the gap,  this work fuses uncertainty quantification with the state-of-the-art certified robustness via randomized smoothing (RS) approach~\citep{cohen_adv_robustness}. 
This allows to
derive two novel forms of robustness guarantees, which enable a cascade of further insights and synergies.
More precisely, we make the following contributions:
\begin{enumerate}
   \item We include uncertainty based rejection into the RS framework by leveraging an uncertainty-equipped classifier. This allows the transferal of future advances in uncertainty quantification into the certified robustness domain.
   \item We derive statistical guarantees in the form of two $\ell_2$-norm robustness radii. 
   The first specifies the radius of a ball around the input in which with high probability predictions will remain both consistent (i.e., the predicted class remains the same) and confident (i.e., have low uncertainty). The second gives the radius of a ball around the input in which with high probability predictions will either be consistent or indicate high uncertainty, and thus, an adversarial example can not mislead the model into a wrong and confident prediction.
   \item We modify the certification process, that is, the sampling process needed to estimate the robustness radii (those proposed in this paper as well as the original one proposed by \citet{cohen_adv_robustness}), such that we yield
   tighter approximations and thereby increased radii. 
   \item We theoretically and experimentally demonstrate the surprising fact, that incorporating uncertainty can even increase the robustness radii in which it is guaranteed that no class change occurs without any fine-tuning or retraining.
   \item We present an empirical evaluation of the novel radii with respect to different architectures and uncertainty measures. 
   \item We demonstrate that by incorporating uncertainty the smoothed classifier gets more robust to out-of-distribution data, i.e., data from outside the model's semantic space. 
\end{enumerate}

We present a brief review on related work in the next section, and introduce the concepts of certified robustness in \Cref{sec:background}. In \Cref{sec:including_uncertainty} we show how to include uncertainty into the certified robustness framework and in \Cref{sec:estimates} we derive a sample-based estimation procedure for the robustness radii. The numerical results are presented in \Cref{sec:experiments}, followed by limitations in \Cref{sec:limitations} and the conclusion in \Cref{sec:conclusion}.

\section{Related work}
\label{sec:related_work}
For a better understanding how our paper fills the gap between certified robustness via randomized smoothing and uncertainty quantification, a short overview over the related work in the different research domains is given below.

\subsection{Deterministic robustness guarantees}
There are different ways of deriving deterministic robustness guarantees, and one of them is the robustness through design by leveraging Lipschitz properties. If a model $f$ is trained in such a way that it is (locally) Lipschitz around an input $x_0$, it means that $|f(x_0)-f(x) | \leq L \cdot | x_o-x|$ holds for any $x \in \mathcal{B}(x_0)$, where $\mathcal{B}(x_0)$ is an area around $x_0$. This property is leveraged to bound the area in which it is guaranteed that for any $x$ the predicted class is the same as for $x_0$~\citep{hein_formal_2017, lipschitz_margin_training_2018}. There have been numerous works building on this idea by guaranteeing lipschitzness of different layer types~\citep{singla_skew_orth, orth_cayley_2021} and more complex architectures~\citep{ singla2022improved,  pmlr-v162-meunier22a, almost_orthogonal_2022}. 
Next to leveraging global Lipschitz bounds during training, \citet{Globally_robust_leino21a} introduced in their paper the idea that a model should abstain from making a prediction, if the sample lies within an $\epsilon$-tube around the decision boundary.

While this robustness by design is appealing, it has the downside that the training for smoothness  naturally decreases the flexibility of a neural network to class changes and thereby often decreases benign accuracy~\citep{lipschitz_margin_training_2018}.  
Another way of deriving deterministic robustness guaranties is through interval bound propagation techniques (IBP)~\citep[e.g.,][]{gowal_IBP, singh_ipb, mueller2023certified}, where the prediction margin is propagated through the network to identify the possibility of input derivations without changing the prediction outcome. However, also for these methods to lead to useful guarantees, the training objective needs to be changed to account for a trade-off between benign and verified accuracy.
\citet{sheikholeslami2021provably} extended the IBP literature, by proposing a way to jointly train a provably robust classifier and adversarial example detector, which includes the possibility of abstaining from making a prediction.
To the best of our knowledge, up to now the accuracies of models  with deterministic certified robustness guarantees are below those  of models with stochastic guarantees which will be explained in the next paragraph.

\subsection{Stochastic robustness guarantees}

The current state-of-the-art for certified robustness is based on stochastic guarantees off-springing from the works of~\cite{cohen_adv_robustness, lecuyer_certified_robust}, and \cite{NEURIPS2019_certified}. In those, one gets high probability guarantees, that the predicted 
class label does not change if the norm of the added perturbations does not exceed a certain threshold. Our work is based on~\cite{cohen_adv_robustness}, who leverage Gaussian randomized smoothing on top of any classifier (which turns the classifier into a random classifier by adding Gaussian noise to the input and averaging over predictions for such randomized input during inference), and hence their guarantee is agnostic to model training and architecture. 
There have been many follow-up works which propose ways to increase the certified robustness radius through e.g.~adversarial training~\citep{Salman}, mixing between benign and adversarial examples~\citep{jeong2021smoothmix}, data augmentation to account for low-frequency features~\citep{AdvRobustmeetsOOD-sun}, consistency training~\citep{consistency_jeong}, 
direct maximization of the certified robustness radius~\citep{Zhai2020MACER}, and using ensembles as base classifiers and distinguishing between certification and prediction~\citep{mark2022boosting}.
Simultaneously, the framework was extended, among others, to certify that the prediction is one of the top-k classes~\citep{jia2020certified_top_k}, to include invariances into the certification guarantee~\citep{schuchardt2022invarianceaware}, to transfer derived robustness guarantees of one model to an approximated model~\citep{ugare2024incremental}, and to allow for robust interpretability~\citep{levine}. 
Additionally, other works focused on deriving guarantees for further $\ell_p$-norm bounds \citep[e.g.,][]{higher_order_robustness, RS_of_all_shapes, tight_certificate_l0}.
Crucial to the RS approaches is that the probability mass for the predicted class of a smoothed classifier is substantially high. To ensure this, previous base classifiers were often trained with some kind of data augmentation. With the rise of diffusion models, this data augmentation training can be replaced by simply using a diffusion noising and denoising step before feeding the input into an ordinary classifier~\citep{carlini2023certified}. While appealing, it was shown that simply stacking the diffusion models and classifiers 
together often results in poor accuracy. To improve on this without extensive retraining of the whole architecture,  \citet{sheikholeslami2022denoised} proposed to add a low-cost classifier to the stacked solution, which reduces the amount of misclassification by deciding to abstain from making a prediction and thereby increasing the certified robustness of these stacked models. 
While the formula of their rejection-base-classifier is very similar to our uncertainty equipped classifier in spirit,  their rejection is based on an additionally trained rejection classifier.  In contrast, our definition of an uncertainty equipped classifier is based on a more general uncertainty function and
includes their approach that results from a specific choice of uncertainty function (namely that given by the rejection classifier). More importantly, their rejection based classifier is used to define a smoothed classifier that does not allow for rejecting the classification. Instead the goal is to derive an alternative way to estimate the robustness radius which can lead to an increase. 
A more in-depth discussion on how the estimation strategy differs is given in \Cref{sec:estimates}.


\subsection{Uncertainty quantification} 
Methods for uncertainty quantification aim at assessing and depicting a model's confidence in its predicted class $y$ often either directly through the likelihood 
or related quantities such as the prediction entropy, 
in case of classification problems. 
It has been shown that standard neural networks architectures tend to be overconfident, which means that they have low entropy and high likelihood even on misclassified or out-of-distribution (OOD) samples~\citep{calibration_guo, hein_relu_overconfident}. 
To fix this, a multitude of methods has been proposed. Deterministic methods for example often include the usage of OOD samples during training~\citep{NEURIPS2021_118bd558} or leveraging ensembles such that predictions (ideally) disagree on OOD samples \citep{pmlr-v180-kumar22a}. 
On the other hand, many probabilistic approaches were proposed, for example by modeling an output distribution over the labels~\citep{Gal2016}, applying  Gaussian processes or Kernel methods on top of neural network architectures~\citep{NEURIPS2022_eb7389b0}, or leveraging Bayesian neural networks which capture model uncertainty due to finite data in their posterior distribution~\citep{Mackay1992}. For a detailed review please be referred to~\citep{ABDAR2021243}. Interestingly though, there are multiple parallels to adversarial robustness research: training with OOD data is also some form of data augmentation and adding Lipschitz constrains to the training seems also to improve the model's confidence and capability to distinguish between in and out of distribution data \citep{pmlr-v216-ye23a}.

\subsection{Score-based detection/rejection}
There have been numerous works that try to detect OOD and/or adversarial examples based on network features~\citep{Feature_squeeze, boosting_ood_feature}, or uncertainty measures like prediction variance~\citep{Feinman2017}, entropy~\citep{Smith2018, robust_detection_2018}, or simply the softmax probabilities~\citep{Hendrycks2016}. 
However, such detection methods can be used for either crafting ``uncertainty attacks''~\citep{uncertainty_attacks} or circumvented in general if the attacker has knowledge of the rejection method and accounts for it during their attack calculation~\citep{obfuscated-gradients}. 
For a score-based rejection model to be robust to the latter, consistent low confidence around benign inputs is needed such that high-confidence adversarial examples are impossible. This observation is the key element for confidence-calibrated adversarial training~\citep{Stutz_CCAT}, where the network is trained to decrease its output probabilities in the direction of an adversarial example successively to the uniform distribution. This training approach leads to more robust models even with regard to attacks based on different $l_p$-norms.

\section{Background}
\label{sec:background}
In this section we give a brief formal overview about the certified robustness via randomized smoothing approach proposed by \citet{cohen_adv_robustness} and the sampling procedure used for estimating the robustness bound.

\subsection{A robustness guarantee for smoothed classifiers} Certified robustness of a classification model is a mathematical guarantee that for all perturbed inputs
$x+ \delta$, with $\delta\in \mathcal{B}$, it holds that the model predicts the same class as for $x$.
\citet{cohen_adv_robustness} derived the state-of-the-art definition for certified robustness of smoothed classifiers in their main theorem which is restated below.

\begin{theorem}[\citet{cohen_adv_robustness}]
\label{thm:cohen}
Let $\mathcal{Y}$ be a discrete and finite label space and $f:\R^d \rightarrow \mathcal{Y}$ be any deterministic or stochastic function. Let $\epsilon \sim \mathcal{N}(0, \sigma^2 I) $ and  $g(x)  = \arg \max_{c \in \mathcal{Y}} \P(f(x+ \epsilon) = c)$. Suppose $c_A \in \mathcal{Y}$ and $p_A, p_B \in [0,1]$ satisfy
\[
\P(f(x+ \epsilon) = c_A) = p_A \geq p_B =\max_{c \in \mathcal{Y}:c \neq c_A} \P(f(x+ \epsilon)=c) \enspace .
\]
Then $g(x+ \delta) = c_A \  \forall \ \delta \text{ with } \| \delta\|_2 < R$, where $R = \frac{\sigma}{2} ( \Phi^{-1}(p_A) - \Phi^{-1}(p_B))$ and $\Phi^{-1}(\cdot)$ is the inverse standard Gaussian CDF\footnote{The original theorem is stated with using lower and upper bounds on $p_A \geq \underline{p_A} \geq \overline{p_B} \geq p_B$ since the exact probabilities can not be directly calculated (but arbitrary well approximated with an increasing number of Monte Carlo samples). For better readability, we introduce these bounds only at a later point when we actually calculate them.}.
\end{theorem}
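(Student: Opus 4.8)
The plan is to show that, as the Gaussian noise is recentered from $x$ to $x+\delta$, the mass the base classifier assigns to $c_A$ cannot fall below—nor any competing class rise above—a level that keeps $c_A$ strictly on top whenever $\|\delta\|_2 < R$. Writing $\mu_0 = \cN(x, \sigma^2 I)$ for the law of $x+\epsilon$ and $\mu_\delta = \cN(x+\delta, \sigma^2 I)$ for the law of $(x+\delta)+\epsilon$, the assertion $g(x+\delta)=c_A$ is exactly $\mu_\delta(\{z : f(z)=c_A\}) > \mu_\delta(\{z : f(z)=c\})$ for every $c \neq c_A$. So first I would reduce the theorem to two one-sided estimates: a worst-case \emph{lower} bound on $\mu_\delta(\{f=c_A\})$ given only that $\mu_0(\{f=c_A\})=p_A$, and a worst-case \emph{upper} bound on $\mu_\delta(\{f=c\})$ given only that $\mu_0(\{f=c\})\le p_B$. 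Since $f$ is an arbitrary, possibly adversarial function, these bounds must hold uniformly over all measurable regions with the prescribed $\mu_0$-mass.

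The central tool would be the Neyman--Pearson lemma. The key observation is that the likelihood ratio $d\mu_\delta/d\mu_0$ at a point $z$ depends on $z$ only through the scalar $\delta^\top(z-x)$, and is strictly increasing in it: the Gaussian density ratio reduces to $\exp(\sigma^{-2}\delta^\top(z-x) - \|\delta\|_2^2/(2\sigma^2))$. Consequently, among all regions of fixed $\mu_0$-mass, the one of \emph{smallest} $\mu_\delta$-mass is the half-space $\{z : \delta^\top(z-x)\le t\}$, and the one of \emph{largest} $\mu_\delta$-mass is the complementary half-space—this is precisely the optimality of likelihood-ratio thresholds. I would therefore replace the unknown regions $\{f=c_A\}$ and $\{f=c\}$ by these extremal half-spaces, collapsing everything to a one-dimensional computation with the marginal $\delta^\top(z-x)$.

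I would then make the half-space bounds explicit. Under $\mu_0$ the projection $\delta^\top(z-x)$ is $\cN(0,\sigma^2\|\delta\|_2^2)$, so the threshold giving mass $p_A$ is $t=\sigma\|\delta\|_2\,\Phi^{-1}(p_A)$; evaluating the same half-space under $\mu_\delta$, where the projection is $\cN(\|\delta\|_2^2,\sigma^2\|\delta\|_2^2)$, yields $\mu_\delta(\{f=c_A\})\ge \Phi(\Phi^{-1}(p_A)-\|\delta\|_2/\sigma)$. The symmetric computation for a competitor gives $\mu_\delta(\{f=c\})\le \Phi(\Phi^{-1}(p_B)+\|\delta\|_2/\sigma)$, using $\Phi^{-1}(1-p)=-\Phi^{-1}(p)$. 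As both expressions are monotone in the underlying $\mu_0$-mass, bounding every competitor by the single runner-up value $p_B$ controls all $c\neq c_A$ simultaneously. Requiring the $c_A$ lower bound to exceed the competitor upper bound, and using that $\Phi$ is strictly increasing, reduces the inequality to $\Phi^{-1}(p_A)-\|\delta\|_2/\sigma > \Phi^{-1}(p_B)+\|\delta\|_2/\sigma$, i.e.\ $\|\delta\|_2 < \tfrac{\sigma}{2}(\Phi^{-1}(p_A)-\Phi^{-1}(p_B))=R$.

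I expect the genuine obstacle to be the Neyman--Pearson step: one must argue rigorously that half-spaces are the true extremizers over \emph{all} measurable regions, not merely a convenient ansatz, and must treat the inequality constraint $\mu_0(\{f=c\})\le p_B$ rather than an equality—observing that shrinking a competitor's $\mu_0$-mass can only lower its attainable $\mu_\delta$-mass, so the extreme case saturates the constraint. Once the extremal geometry is pinned down, the remaining Gaussian integral evaluations and the final monotonicity argument are routine.
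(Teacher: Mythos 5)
Your proposal is correct and follows essentially the same route as the proof the paper defers to, namely the Neyman--Pearson argument of \citet{cohen_adv_robustness}: reduce to worst-case half-spaces via the monotone Gaussian likelihood ratio, evaluate the one-dimensional projections to get $\Phi(\Phi^{-1}(p_A)-\|\delta\|_2/\sigma)$ versus $\Phi(\Phi^{-1}(p_B)+\|\delta\|_2/\sigma)$, and compare. The only detail to tidy up is that for stochastic $f$ the sets $\{f=c\}$ must be replaced by the functions $z\mapsto\P(f(z)=c)$ and the randomized-test version of Neyman--Pearson invoked, exactly as in the original appendix.
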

While the theorem holds for very general functions, $f$ is usually considered to be a deterministic classification network, which
we will for brevity refer to as
\emph{base classifier}.
The stochastic classifier $g$ resulting from adding Gaussian noise to the input of the base classifier is called \emph{smoothed classifier}.
Moreover, throughout the paper we will 
use $p_A$ for the probability mass assigned by the smoothed classifier to the most likely class,
i.e., $p_A =  \max_{c \in \mathcal{Y}} \P(f(x+ \epsilon) = c) =  \P(f(x+ \epsilon) = c_A)$ and denote with $p_B =\max_{c \in \mathcal{Y}: c \neq c_A} \P(f(x+ \epsilon)=c) $ the probability mass of the ``runner-up class'' which is the second most likely one.
The above theorem certifies that the prediction of the smoothed classifier $g(\cdot)$ for an input $x$ is invariant or consistent under an input shift of $\delta \in \mathcal{B}$ with $\mathcal{B} = \{ \delta \; | \; \| \delta \|_2 \leq R = \frac{\sigma}{2} ( \Phi^{-1}(p_A) - \Phi^{-1}(p_B))\}$. This consistency requirement can formally be phrased as:

\begin{definition}[Consistent predictions]
    We say a classifier $g:\mathbb{R}\rightarrow \mathcal{Y}$ is consistent in its prediction $\af{ g(x)} =c_A$  in an area $\mathcal{B}$, 
    if
\begin{equation*}
        \forall \ \delta \in \mathcal{B}: \ \arg\max_{c\in\mathcal{Y}} g(x+ \delta) =c_A \enspace .
    \end{equation*}
\end{definition}

\citet{cohen_adv_robustness} further show in Theorem~2 of their paper, that if $p_A + p_B \leq 1$ the derived bound is tight, in the sense that, if we have a perturbation which exceeds $R$, then there exists a classifier $h(\cdot)$ which leads to the the same probability mass under Gaussian noise as $f(\cdot)$, but will predict a class other than $c_A$ for $x+\delta$ with $\| \delta \|_2 >R$. A result similar to Theorem \ref{thm:cohen} but with a different proof strategy was derived by~\citet{Salman}, where it is phrased that $R$ is the minimal bound on the required $\ell_2$ perturbation which leads to a label flip from class $c_A$ to (any) class $c_B$.

\subsection{Sample based estimation of the robustness guarantee}
For actually turning  Theorem \ref{thm:cohen} into a guarantee in practice, one needs to determine $p_A$ and $p_B$, which is in general not feasible in closed form.  Therefore, \citet{cohen_adv_robustness} proposed a statistical way to estimate an lower bound $\underline{p_A}$ for $p_A$ and an upper bound $\overline{p_B}$ for $p_B$, which hold with a probability of $1- \alpha$. Their certification process is based on a two step sampling approach: First, the class labels of $n_0$ noisy input variations are predicted and the amounts of occurrences of each class are counted. Then, the test of~\cite{Hung2016RankVF} is used to abstain from making a prediction in cases where one can not guarantee with probability $1- \alpha$, that $c_A$ is the class with the highest probability mass. 
Based on the prediction counts of another set of $n_1$  draws, 
they set $\underline{p_A}$ to the lower confidence bound of a binomial distribution with $n_A$ observations for class $c_A$, $n_1$ total draws, and confidence level $1- \alpha$. Lastly, instead of calculating $\overline{p_B}$, they approximate this probability by $1-\underline{p_A}$.


%
\section{Including uncertainty into certified robustness}
\label{sec:including_uncertainty}

\begin{figure*}[htb]
    \centering
    \subfloat[With $\theta = \text{sup}$.]{\includegraphics[width= 0.27\textwidth]{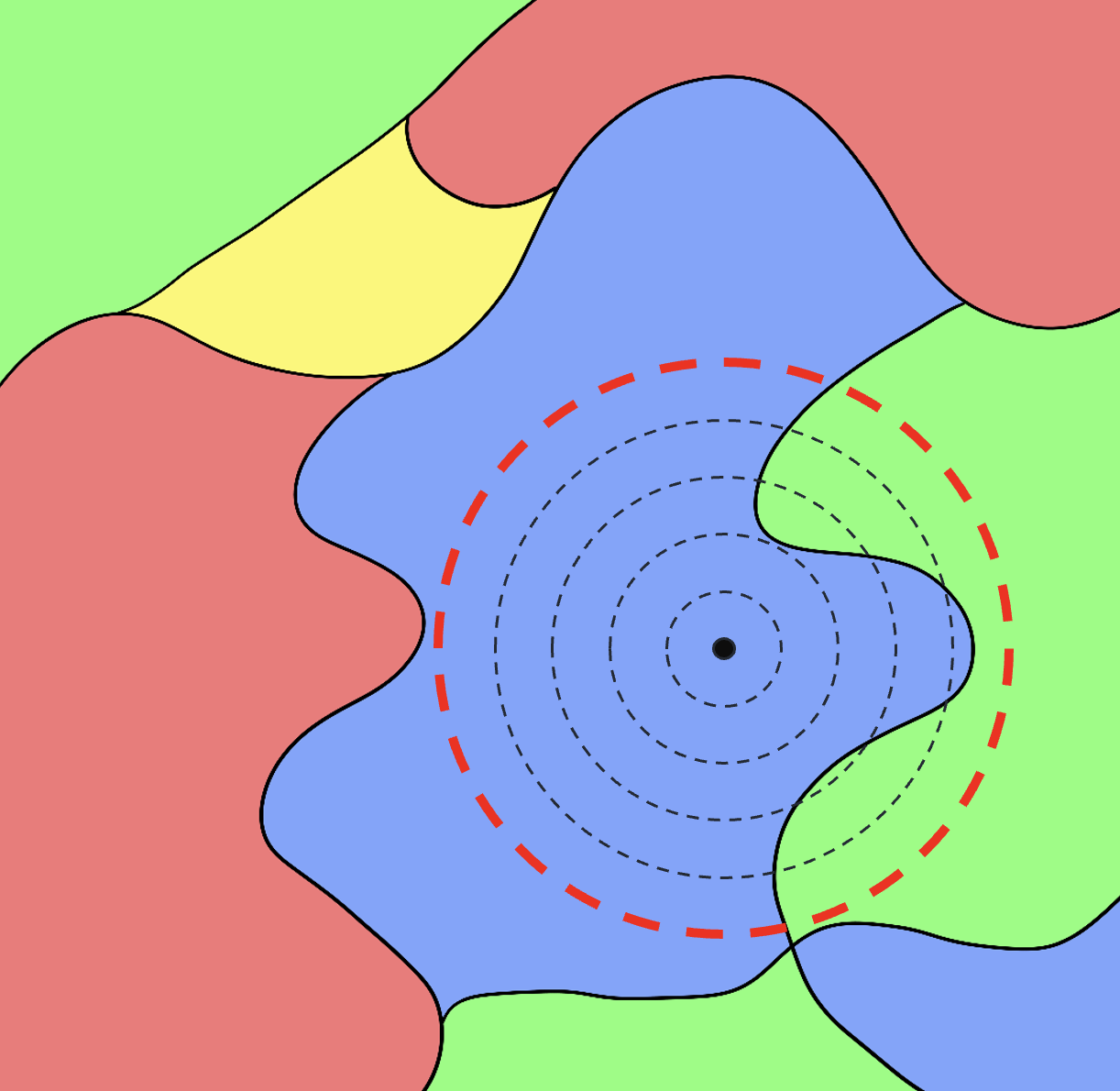}}
\qquad
\subfloat[Symmetric uncertainty.]{\includegraphics[width= 0.27\textwidth]{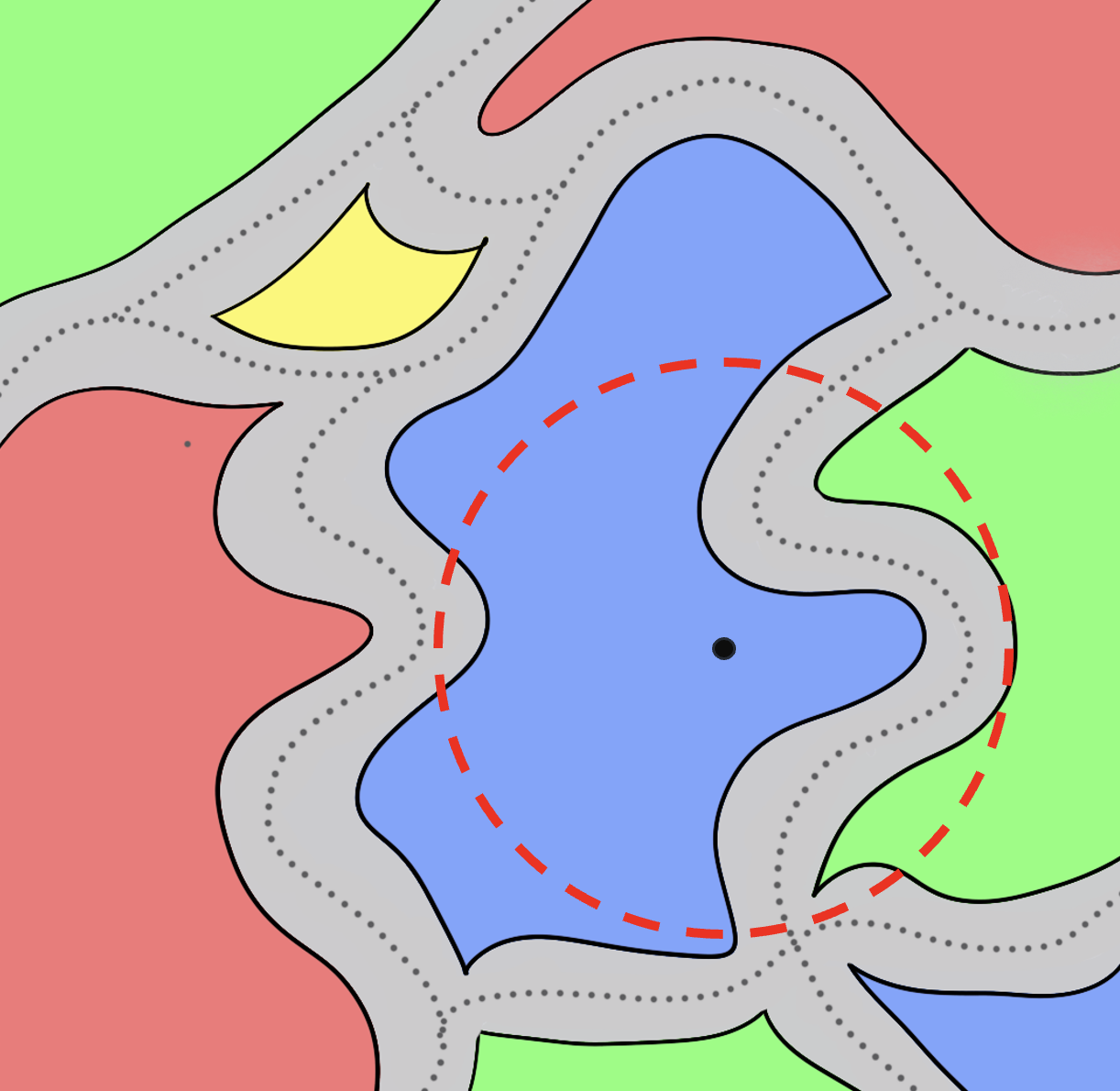}}
\qquad
\subfloat[Asymmetric uncertainty.]{\includegraphics[width= 0.27\textwidth]{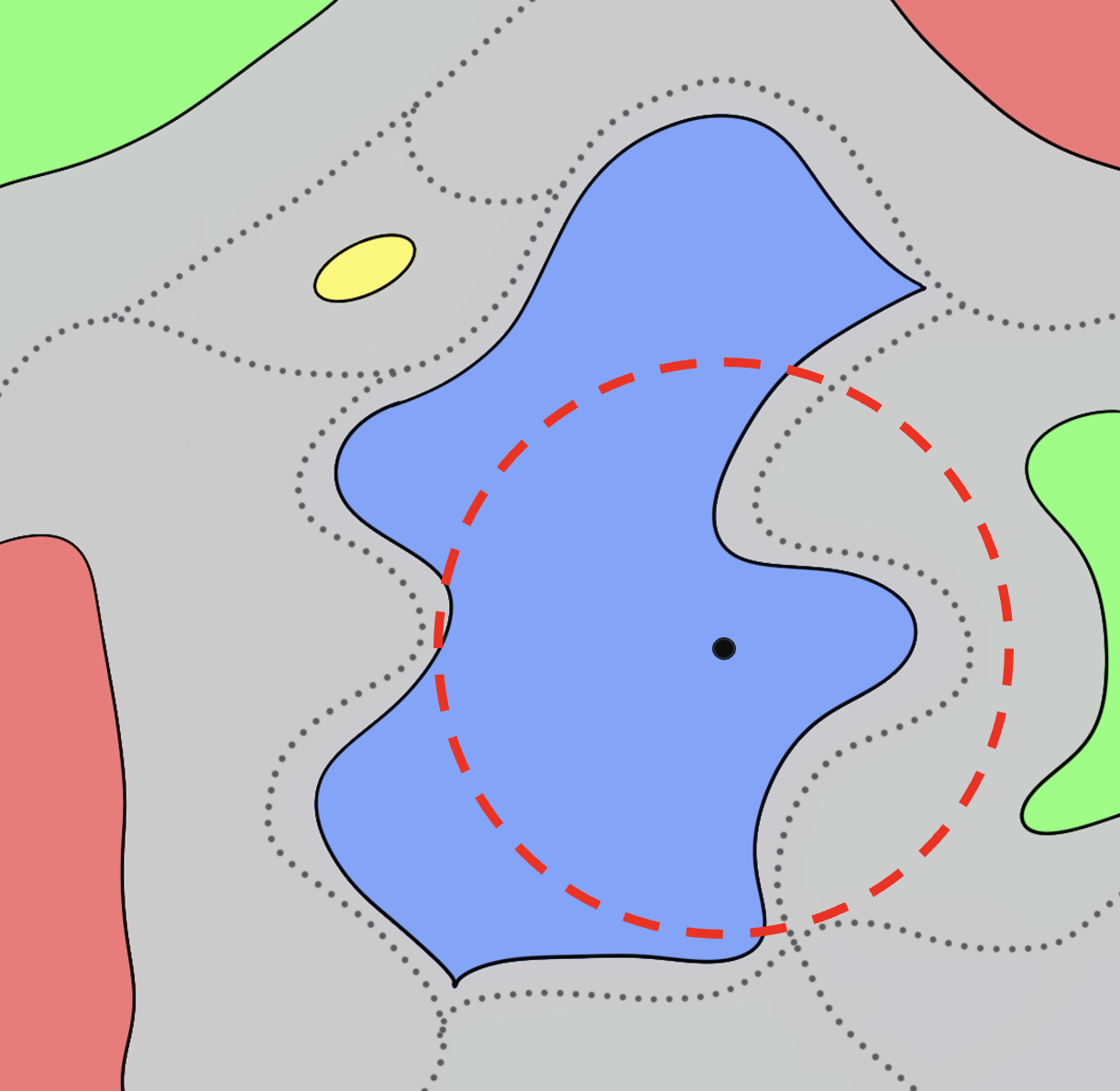}}
    \caption{\textbf{Decision regions of a classifier $f$ } depicted in different colors, where the black dot marks the position of an input $x$ and the circles the different probability levels of a Gaussian distribution centered at $x$: (a) is the image recreated based on~\citet{cohen_adv_robustness} where the class with the most probability mass under Gaussian noise is predicted (here the blue class).  
    (b) and (c) show different uncertainty distributions around the original decision boundaries (dotted) which lead to different uncertainty regions depicted in gray. While
    the uncertainty behavior around decision boundaries in (b) is almost symmetric, 
    image (c) shows a strong asymmetric uncertainty behavior, making a confident misclassification difficult. 
    }
    \label{fig:example_bounds}
\end{figure*}

We now extend this framework to derive robustness guarantees for uncertainty quantifying classifiers that allow to abstain from making an prediction if the uncertainty exceeds a predefined threshold. Intuitively, the robustness of such an classifier can be characterized on the one hand by the area around inputs in which the prediction does not change and certainty remains high, and on the other hand by the area around an input in which it is guaranteed that the prediction does not change, that is, the classifier either predicts the same class or shows high uncertainty.

To derive such guarantees, let us start by formally defining an uncertainty quantifying function.

\begin{definition}[Uncertainty function]
\label{def:uncertainty}
Given a classifier $f: \mathcal{X} \rightarrow \mathcal{Y}$, where $\mathcal{Y}$ is a set of classes, a function $u_f: \mathcal{X} \rightarrow \mathcal{U}$ that maps an input $x \in \mathcal{X}$ to a score $u_f(x)$ is called an uncertainty function of $f$
if a high value of
$u_f(x)$ is
indicative for a false classification of $f$\footnote{The negation of this definition would equal the definition of a confidence score~\citep{geifman2018biasreduced}.}.
\end{definition}

Commonly used functions for quantifying uncertainty in neural networks fall under this definition.
To give examples for such functions, let us first recall, that neural network classifiers usually output logits that are turned in a categorical probability distribution $p(\cdot|x)$ over the classes by the softmax function. 
The neural network prediction is than given by $f(x)=argmax_{y \in \mathcal{Y} p(y|x)}$ and the most simple uncertainty function that can be taken into account is the \textbf{prediction confidence} which is just given by the probability of the most likely (and thus predicted) class, i.e., by
$$max_{y \in \mathcal{Y}} p(y|x)\enspace.$$
Another frequently uses uncertainty function for neural networks is
the \textbf{negative prediction margin} between the two most likely classes $y_A$ and $y_B$, that is, $$-( p(y_A|x)- p(y_B|x) \enspace,$$ which is related to the closes distance to a decision boundary. Last but not least,
the \textbf{entropy} defined by 
\begin{equation}\label{eq:entropy}
    H(x) = - \sum_{c \in \mathcal{Y}} \log p(y_c|x) \enspace ,
\end{equation} 
is often employed as uncertainty function. 
These quantities can easily be computed for any neural network with softmax output neurons. 
However, Definition \ref{def:uncertainty} also allows for more advanced methods such as for example approximate Bayesian methods like Monte Carlo dropout \citep{MonteCarloDropout} or Stochastic Weight Averaging Gaussian \citep{SWAG}.


Given the uncertainty function of a model, we can formally define a confident prediction as following:
\begin{definition}[Confident prediction]
Given a classifier  $f: \mathcal{X} \rightarrow \mathcal{Y}$ with an uncertainty function $u_f(s)$ and a predefined threshold $\theta$.  A prediction of  $f$ is said to be confident under $u_f(s)$ with a threshold $\theta$, if $u_f(s) \leq \theta$.
\end{definition}
The threshold $\theta$ is usually chosen in such a way that the  
accuracy is only slightly above the accuracy given when only confident predictions are taken into account~\citep{Stutz_CCAT}. 
Merging the above definitions, an uncertainty 
equipped classifier is given as:
\begin{definition}[Uncertainty-equipped classifier]
Let  $f: \mathcal{X} \rightarrow \mathcal{Y}$ be a classifier 
and  $u_f: \mathcal{X} \rightarrow \mathcal{U}$ an uncertainty function on $f$. Then the corresponding uncertainty-equipped classifier $f^*_{u_f, \theta}: \mathcal{X} \rightarrow \mathcal{\tilde Y}$ with threshold $\theta$ is defined by 
\begin{equation}
       f^*_{u_f, \theta}(x) =  \begin{cases}
        c_{\theta}, & \text{if } f(x) = c \in \mathcal{Y} \enspace \text{and} \enspace u_{f}(x)< \theta \\
        v_{\theta}, & \text{if }  u_{f}(x) \geq \theta \enspace ,
        \end{cases} 
\end{equation}
where $ c_{\theta} \in \mathcal{Y}_{\theta}$ for any $ c \in \mathcal{Y}$ is referred to ``class $c$ with confidence'', $\mathcal{\tilde Y} = \mathcal{Y}_{\theta} \cup \{ v_{\theta}\}$,  and 
 $v_{\theta}$ is called the \textbf{uncertainty class}. 
\end{definition}

Note, that if we set $\theta$  to any value larger or equal to
$\sup (\mathcal{U})$, it holds that $\{x \in \mathcal{X}| f^*_{u_f,\theta}(x) = c_{\theta} \} = \{x \in \mathcal{X}| f(x) = c \} $ and the class $v_{\theta}$ will be empty. 
That is, we recover the original function $f$.
This situation is depicted in Figure~\ref{fig:example_bounds} (a), where only colored regions are present, representing different decision regions of a classifier $f$. For (b) and (c) 
$\theta$ is sufficiently smaller than $\sup (\mathcal{U})$, such that the gray-colored uncertainty regions appear as decision areas where $u_{f}(x)$ is above the threshold $\theta$ and the uncertainty class is predicted.

By applying the certified robustness via randomized smoothing framework to the uncertainty-equipped classifier we now  obtain the novel robustness guarantees that allow us to evaluate the robustness gained by uncertainty-based rejection.

\begin{corollary}[Certified robustness with uncertainty]
\label{cor:uncertainty_robust}
Let $f^*_{ u_f, \theta}:\R^n \rightarrow \mathcal{\tilde{Y}}$ be an uncertainty equipped classifier as defined above. 
Let $\epsilon \sim \mathcal{N}(0, \sigma^2 I)$ and $g$ be a
smoothed classifier given by
\begin{equation}
    g(x) = 
    \arg \max_c \P ( f^*_{u_f, \theta}(x+\epsilon) = c ) 
    \enspace.
\end{equation}
Suppose that for a given $x \in \mathcal{X}$ and some $c_{A, \theta} \in \mathcal{\tilde{Y}}$ it holds $g(x)=c_{A, \theta} \neq v_{\theta}$ and that the runner up class is defined by $c_{B, \theta} = \arg \max_{c\neq \{ c_{A, \theta}, v_{\theta} \}} \P ( f^*_{ u_f, \theta}(x+\epsilon) = c ) $. For simplifying notation we set $p_{c_{\theta}} (x) = \P ( f^*_{u_f, \theta}(x+\epsilon) = c_{\theta} )$. Then it holds that:

\begin{enumerate}[label=(\alph*)]
     \item The radius in which we have a \textbf{consistent and confident (CC)} prediction for the current predicted class is given by $\RC = \frac{\sigma}{2} \cdot \left( \Phi^{-1}\left(p_{c_{A, \theta}}(x)  \right) - \Phi^{-1}\left(\max_{c \neq c_{A, \theta}}p_c(x) \right)\right)$ such that $c$ is the second most likely class other than $c_{A, \theta}$, which explicitly includes the uncertain class. That means, that  $\forall \delta$ with $\| \delta\|_2 < \RC$ it holds $ g(x+ \delta) = c_{A, \theta}$.
    \item The radius within which there are \textbf{no confident label change (NCL)} by the smoothed classifier 
    is given by:  $\RNCM = \frac{\sigma}{2} \left( \Phi^{-1}\left(p_{c_{A, \theta}}(x) + p_{v_\theta}(x) \right) - \Phi^{-1}\left(p_{c_{B, \theta}(x)}\right)\right)$, that is $g(x+ \delta) \in \{ c_{A, \theta} 
    ,v_{\theta} \} \ \forall \| \delta\|_2 < \RNCM $, with $\Phi^{-1}(\cdot)$ as in Theorem~\ref{thm:cohen}.
\end{enumerate}
\end{corollary}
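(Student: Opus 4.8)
The plan is to read $f^*_{u_f,\theta}$ as a single (stochastic) map into the finite label space $\mathcal{\tilde Y} = \mathcal{Y}_\theta \cup \{v_\theta\}$, which is exactly the kind of function Theorem~\ref{thm:cohen} is built for. Part (a) will then be a verbatim instance of that theorem, and part (b) will be an instance obtained after collapsing two of the labels into one.

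For (a), I would apply Theorem~\ref{thm:cohen} with $f := f^*_{u_f,\theta}$, label space $\mathcal{\tilde Y}$, top class $c_A := c_{A,\theta}$, and $p_A := p_{c_{A,\theta}}(x)$. The only point worth stating explicitly is that the runner-up is now the largest probability over \emph{all} remaining labels of $\mathcal{\tilde Y}$, and since $v_\theta \in \mathcal{\tilde Y}$ this maximum ranges over the uncertainty class as well, i.e. $p_B := \max_{c\neq c_{A,\theta}} p_c(x)$. Substituting into $R = \frac{\sigma}{2}(\Phi^{-1}(p_A)-\Phi^{-1}(p_B))$ gives $\RC$, and Theorem~\ref{thm:cohen} immediately yields $g(x+\delta)=c_{A,\theta}$ for all $\|\delta\|_2 < \RC$. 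No new work is required.

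For (b), the idea is to merge the correct-and-confident label $c_{A,\theta}$ with the uncertainty label $v_\theta$ into a single super-label before invoking the smoothing bound. Concretely I would set $A := \{z\in\R^n : f^*_{u_f,\theta}(z)\in\{c_{A,\theta},v_\theta\}\}$; because $\{f^*=c_{A,\theta}\}$ and $\{f^*=v_\theta\}$ are disjoint, $\P(x+\epsilon\in A)=p_{c_{A,\theta}}(x)+p_{v_\theta}(x)$. I would then invoke the Gaussian shift estimates underlying Theorem~\ref{thm:cohen} (Neyman--Pearson / Gaussian concentration): the mass of $A$ under the shifted mean cannot drop below $\Phi(\Phi^{-1}(p_{c_{A,\theta}}(x)+p_{v_\theta}(x)) - \|\delta\|_2/\sigma)$, while the mass of any competing confident class $c\neq c_{A,\theta}$ (each with base probability at most $p_{c_{B,\theta}}(x)$) cannot rise above $\Phi(\Phi^{-1}(p_{c_{B,\theta}}(x)) + \|\delta\|_2/\sigma)$. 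Forcing the lower bound strictly above the upper bound and solving for $\|\delta\|_2$ reproduces exactly $\RNCM = \frac{\sigma}{2}(\Phi^{-1}(p_{c_{A,\theta}}(x)+p_{v_\theta}(x)) - \Phi^{-1}(p_{c_{B,\theta}}(x)))$.

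The main obstacle is the final translation from this mass comparison back to the membership statement $g(x+\delta)\in\{c_{A,\theta},v_\theta\}$. The shift estimates certify that the \emph{combined} mass $p_{c_{A,\theta}}(x+\delta)+p_{v_\theta}(x+\delta)$ dominates the mass of every individual competing confident class, which is precisely the statement that the merged super-label is the $\arg\max$ over the collapsed label space. The cleanest and safest route is therefore to collapse $c_{A,\theta}$ and $v_\theta$ into one label first, apply Theorem~\ref{thm:cohen} to the resulting classifier, and only then read its prediction as lying in $\{c_{A,\theta},v_\theta\}$. The step needing the most care is verifying that this collapse is legitimate — the disjointness giving the mass identity, and the fact that every non-merged label keeps base probability $\le p_{c_{B,\theta}}(x)$ so that $c_{B,\theta}$ is genuinely the worst-case runner-up — rather than the smoothing inequality itself, which is inherited unchanged from Theorem~\ref{thm:cohen}.
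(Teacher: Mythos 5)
Your proposal is correct and follows essentially the same route as the paper: part (a) is a direct instantiation of Theorem~\ref{thm:cohen} applied to $f^*_{u_f,\theta}$ over the enlarged label space $\mathcal{\tilde Y}$ (so the runner-up maximum includes $v_\theta$), and part (b) reruns the identical argument after merging $c_{A,\theta}$ and $v_\theta$, i.e., with $\tilde{p}_A = p_{c_{A,\theta}}(x)+p_{v_\theta}(x)$ substituted for $p_A$, using disjointness for the mass identity and $p_{c_{A,\theta}}(x)\geq p_{c_{B,\theta}}(x)$ to verify the theorem's hypothesis. Your explicit observation that the conclusion of (b) must be read on the collapsed label space --- since dominance of the \emph{combined} mass over each competitor does not by itself force the argmax over the original space into $\{c_{A,\theta},v_\theta\}$ --- is, if anything, more careful than the paper's one-line treatment of the same step.
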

\begin{proof}
    (a) is equal to the original Theorem 1 of \citet{cohen_adv_robustness} when using an uncertainty equipped classifier as $f$.\\
    For proving (b) we only need to make slight modifications: In the proof of Theorem 1 by \citet{cohen_adv_robustness} they set $\underline{p_A} \leq \P(f(x+ \epsilon) = c_A)$. For proving our claim, we just need to redefine $\Tilde{p}_A = p_{c_{A, \theta}}(x) + p_{v_{\theta}}(x) = \P(f(x+ \epsilon) = c_{A, \theta} \text{ or } f(x+ \epsilon) = v_{\theta} ) $. 
    Per definition we know, that $p_{c_{A, \theta}}(x) \geq p_{c_{B, \theta}}(x) $, therefore $p_{c_{A, \theta}}(x) + p_{v_{\theta}}(x) \geq p_{c_{B, \theta}}(x) $ which is a necessity for the theorem. The rest of the proof is identical to \citet{cohen_adv_robustness}, where one needs to replace $\underline{p_A}$ by $\Tilde{p}_A $ or its lower bound everywhere. 
\end{proof}

While a high $\RC$ is a guarantee against all attacks including possible uncertainty attacks~\citep{uncertainty_attacks} a high $\RNCM$ is desirable where confident label changes can have disastrous effects as for example mistaking stop signs in autonomous driving. 
The trivial case, where every input to the base classifier $f$ is classified as uncertain, leads to the highest $\RNCM$, but also to the lowest $\RC$ and accuracy. This is undesirable, and ideally, both $\RC$ and $\RNCM$ should be high.
Another consequence of the corollary regards the ordering of the certified robustness radii, that is,
\begin{equation}\label{eq:ordering}
    \RC \leq \RNCM \enspace ,
\end{equation}
and equality of $R_{\text{CC}, \text{sup}} = R_{\text{NCL}, \text{sup}} $ which we will simply denote as $R$ to avoid confusion. 
It is not directly clear, however, how $\Rsup$ and $\RC$ relate to each other. Intuitively, one may think, that including uncertainty should decrease the robustness radius. However,
including uncertainty does impact both quantities needed in the calculation of the radius. This allows for situations where the robustness radius stays indeed the same (but guarantees confidence as well) or even increases, which is specified in the following corollary.

\begin{corollary}
The certified robustness radius guaranteeing
a consistent and confident prediction for a smoothed uncertainty-equipped classifier is greater than or equal to the robustness radius guaranteeing a consistent prediction in the original setting (without leveraging the uncertainty of the base classifier)
when using the same amount of noise $\sigma$
i.e., $\RC > R$, iff 
\begin{align}
     &\Phi^{-1} (\max_{c_{\text{sup}} \neq c_{A, \text{sup}}}p_{c_{\text{sup}}})  - \Phi^{-1} (\max_{c\neq c_{A, \theta}} p_c)  \label{eq:p_b_difference}\\
      & > \Phi^{-1} (p_{c_{A, \text{sup}}}) - \Phi^{-1} (p_{c_{A,\theta}})  \enspace. \label{eq:better_with_uncertainty}
\end{align}
\end{corollary}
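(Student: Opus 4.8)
The plan is to reduce the claimed ``iff'' to a single algebraic rearrangement, exploiting that both radii are affine in quantities of the form $\Phi^{-1}(\cdot)$ with the same strictly positive prefactor $\sigma/2$. First I would make the identification $R = R_{\text{CC},\text{sup}}$ explicit. As observed right after the definition of the uncertainty-equipped classifier, setting $\theta \geq \sup(\mathcal{U})$ empties the uncertainty class $v_\theta$ and recovers the base classifier $f$, so the consistent-and-confident radius of \Cref{cor:uncertainty_robust}(a) evaluated at $\theta = \sup(\mathcal{U})$ coincides with the original Cohen radius of \Cref{thm:cohen}. Writing both radii out via \Cref{cor:uncertainty_robust}(a) then gives
\[
R = \frac{\sigma}{2}\left( \Phi^{-1}(p_{c_{A,\text{sup}}}) - \Phi^{-1}\bigl(\max_{c_{\text{sup}} \neq c_{A,\text{sup}}} p_{c_{\text{sup}}}\bigr)\right), \qquad \RC = \frac{\sigma}{2}\left( \Phi^{-1}(p_{c_{A,\theta}}) - \Phi^{-1}\bigl(\max_{c \neq c_{A,\theta}} p_c\bigr)\right).
\]

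Next I would form the difference $\RC - R$ and factor out $\sigma/2 > 0$. Because this prefactor is strictly positive, $\RC > R$ holds if and only if the bracketed difference of the four $\Phi^{-1}$-terms is positive. Collecting the two top-class terms $\Phi^{-1}(p_{c_{A,\theta}})$ and $\Phi^{-1}(p_{c_{A,\text{sup}}})$ on one side and the two runner-up terms on the other yields exactly inequality (\ref{eq:p_b_difference})$\,>\,$(\ref{eq:better_with_uncertainty}). Every step --- multiplying by the positive constant $2/\sigma$ and transposing terms across the inequality --- is reversible, and it is this reversibility that upgrades the sufficient implication to a genuine equivalence, justifying the ``iff.''

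The main thing to be careful about is not an analytic obstacle but a bookkeeping one: under threshold $\theta$ the runner-up maximum $\max_{c \neq c_{A,\theta}} p_c$ ranges over all classes in $\mathcal{\tilde Y}$ and may therefore be attained by the uncertainty class $v_\theta$, whereas under the sup-threshold $v_\theta$ is absent, so $\max_{c_{\text{sup}} \neq c_{A,\text{sup}}} p_{c_{\text{sup}}}$ is a genuine second-class mass. I would also flag that the identification $R = R_{\text{CC},\text{sup}}$ is what makes both radii share the same functional form, so that the gain $\RC - R$ decomposes cleanly into ``how much the competing mass shrinks'' (the left-hand side, (\ref{eq:p_b_difference})) minus ``how much the top-class mass shrinks'' (the right-hand side, (\ref{eq:better_with_uncertainty})) once rejection is switched on. The interpretive payoff --- that $\RC$ exceeds $R$ precisely when introducing rejection erodes the runner-up more than the winner on the $\Phi^{-1}$-scale --- then follows with no further computation beyond the rearrangement itself.
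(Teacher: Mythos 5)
Your proposal is correct and matches the paper's own (one-line) justification, which simply states that the corollary ``directly follows from rearranging the terms in inequality $\RC > \Rsup$''; you carry out exactly that rearrangement, including the identification $R = R_{\text{CC},\text{sup}}$ via $\theta \geq \sup(\mathcal{U})$ and the reversibility of dividing by $\sigma/2 > 0$. The only discrepancy worth noting is inherited from the paper itself: the prose says ``greater than or equal to'' while the displayed condition characterizes the strict inequality $\RC > R$, and your argument (correctly) proves the strict version.
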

\begin{figure}
  \begin{center}
\includegraphics[trim=0 10 0 10,clip,width= 0.35\textwidth]{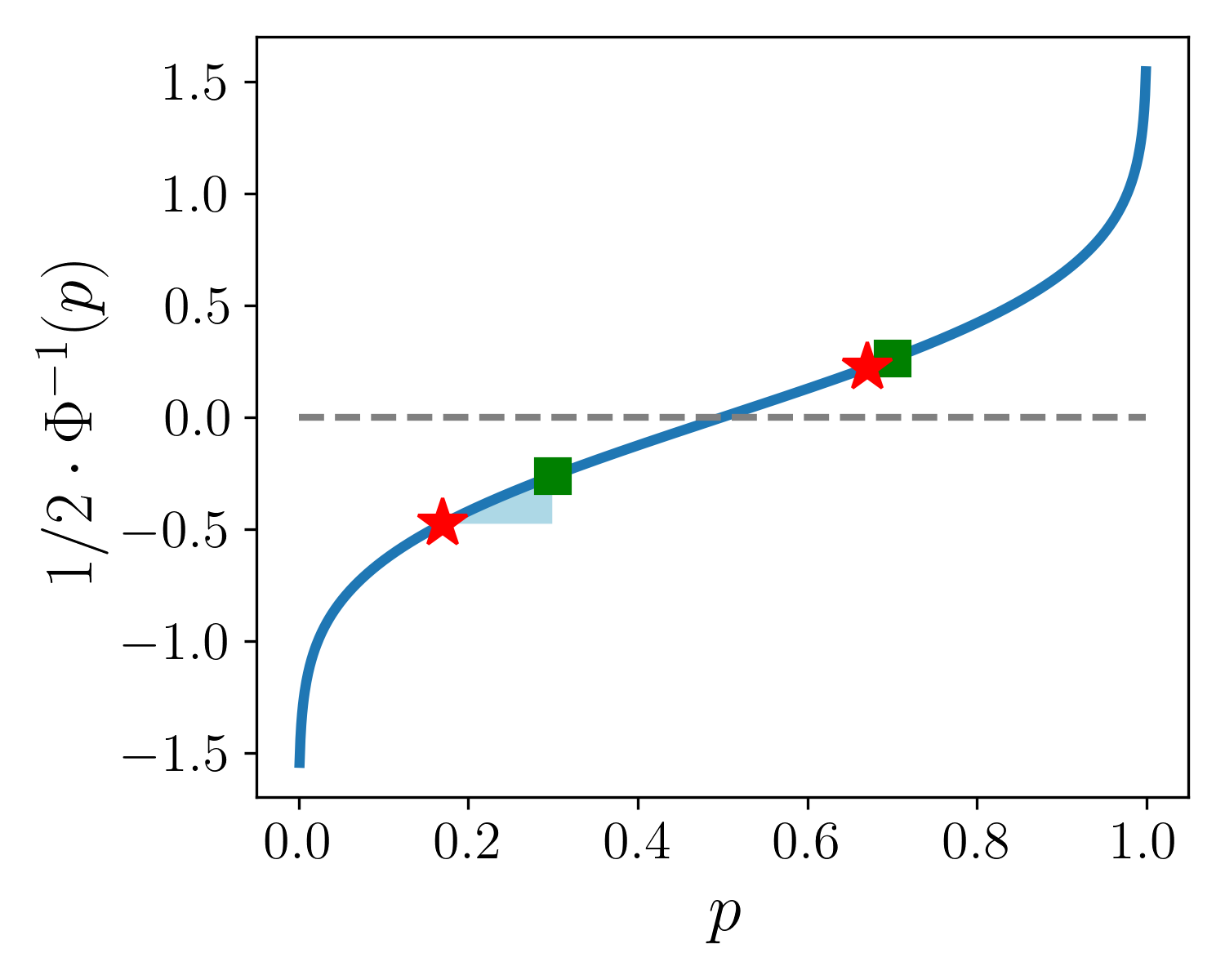}
  \end{center}
  \caption{
\textbf{  An illustration of why $R_{\text{CC}, \theta}$ can provide a stronger guarantee under appropriate conditions.}
The plot shows $\frac{1}{2}\cdot \Phi^{-1}(p)$ in dependence of $p$. (\LARGE{\textcolor{tab_green}{$\sqbullet$}}\normalsize) denote values belonging to $p_{c_{A, sup}} $and $p_{c_{B, sup}}$, whereas (\textcolor{tab_red}{$\bigstar$}) represents values w.r.t. $p_{c_{A, \theta}} $ and $\max_{c \neq c_{A, \theta}} p_{c} $. The bigger decrease of the runner-up class probability with uncertainty results in a higher $y$-axis difference which is here equal to the certified robustness radii.
  }
\label{fig:toy_and_artifacts}
\end{figure}
This corollary, which directly follows from rearranging the terms in inequality $\RC > \Rsup $, relates the change of class probability masses due to introduced uncertainty to improved certified radii. It shows for example, that if $p_{c_{A, \text{sup}}}$ is high, than 
considering $p_{c_{A, \theta}}$  instead of  $p_{c_{A, \text{sup}}}$ can hardly lead to a robustness increase as the exponential behavior of $\Phi^{-1}(\cdot)$ in the tails (compare Figure~\ref{fig:toy_and_artifacts}) increases \cref{eq:better_with_uncertainty}. 
However, the reversed statement holds true for the runner-up probability, such that 
\cref{eq:p_b_difference} is increased making a robustness advantage of with leveraging uncertainty more likely. 
To sum up, these observations can be leveraged to improve the certified robustness guarantee with uncertainty $RC$ over $R$ if either $p_{c_{A, \text{sup}}}$ is low and/or uncertainty regions are asymmetric  around  decision boundaries at  the expense of the second most likely class.
This latter observation is particularly interesting as it describes the ideal uncertainty behavior around decision boundaries for increased robustness in neural networks, which is still unexplored. 

\section{Sample based estimation of the robustness radii}
\label{sec:estimates}
In this section we describe how we empirically derive the class probabilities and the resulting robustness guarantees. 
Since we can not estimate these quantities analytically, we aim at a sample-based robustness certificates that hold with an $\alpha$ error of $0.001$ (as used in previous works). 

\subsection{Description of the sampling procedure.}
Our approximation procedure
is similar to that used by~\citet{cohen_adv_robustness} with pseudo-code being provided in Algorithm~\ref{alg:predict} and \ref{alg:certify}.
To allow for a fair comparison of results we first describe how our approach differs for the original estimation of the certified robustness radius and detail later how its applied for estimating $\RNCM$ and $\RC$. 

One major difference compared to our work is that 
\citet{cohen_adv_robustness} 
make use of the inequality 
$$
p_{c_{B, \theta}}= 1- \sum_{c \in \mathcal{Y}: c \neq c_B} \P(f(x+ \epsilon)=c) \leq 1- p_{c_{A, \theta}} \leq 1- \underline{p_{c_{A, \theta}}}  
$$ 
and approximate $p_{c_{B, \theta}}$ by $1- \underline{p_{c_{A, \theta}}}$. The reason for this simplification was that they reported to have mostly observed one class or only one other class
when approximating the class probabilities based on  $n_0=100$ samples. 
However, by increasing the initial sample size to $n_0=1,\!000$ we observe that noisy input variations of the same image are more frequently assigned different class labels, as can be seen in Figure~\ref{fig:num_classes} for CIFAR10. 
\begin{figure}[t]
\centering
\includegraphics[width=0.95\linewidth]{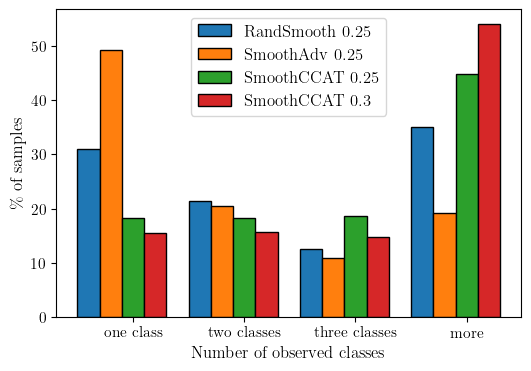}
    \caption{\textbf{Number of different assigned class labels to noisy versions of the same input for CIFAR10.} The one-vs-all method of~\citet{cohen_adv_robustness} is suboptimal when examples have multiple neighboring classes;
we observe this is typically the case when using $n_0=1,\!000$.}
    \label{fig:num_classes}
\end{figure}
Results on ImageNet are similar and depicted in \Cref{sec:appendix_experiments}.
Therefore, we altered the estimation procedure of Cohen et al. to 
leverage the presence of multiple class labels for better approximations. 
To do so, we first sample $n_0$ noisy images, predict their class label and collect the class counts in \texttt{counts}. 
We then conduct the unadjusted pairwise test introduced by~\citet{Hung2016RankVF} for identifying the winning class and the runner-up class with confidence $1-\alpha$. If we do not get a winning class, we abstain from making a prediction. If we do not get a runner-up class with the specified confidence value, we simply use the one-vs-all approach as in~\citet{cohen_adv_robustness}. The high-level decision structure is depicted in~\Cref{alg:first_step}\footnote{The increase of the initial sample size from $n_0 = 100$ to $n_0=1,\!000$ was made for deriving statistical significant runner-up class estimates in around 30\% of the cases.}.

\begin{figure}[tb]
\centering
    \begin{minipage}{.805\linewidth}
    \rule{\linewidth}{0.4pt} \\
    \textbf{First step in \textsc{Certify}: Get class estimates}   \\ 
    \vspace{-1ex} 
    \hrule
\begin{tikzpicture}
\node [align=left] [xshift = -0.3cm]  { 
    \texttt{counts} $\leftarrow$ \textsc{SampleUnderNoise}($f,x,  \sigma, n_0$)\\
    $n_A, n_B, n_C \leftarrow $ \# top three entries in \texttt{counts} \vspace{2pt} \\ 
    \phantom{12345}\textsc{BinomPValue}($n_A, n_A + n_B)\leq \alpha$} [sibling distance = 3.5cm, level distance = 1.3cm]
    child [red] {node[xshift = -0.9cm] {\textsc{Abstain}}} 
    child {node[align=center][yshift = -.3cm] [xshift = -0.9cm]{\textbf{return} $\hat{c}_A$ \\ \textsc{BinomPValue}($n_B, n_B + n_C)\leq \alpha$}
    child [red] {node[xshift = -0.9cm] {\textsc{one-vs-all}}}     
    child {node[xshift = -0.9cm] {\textbf{return} $\hat{c}_B$} 
    edge from parent [blue]
        node[right] {yes}}
    edge from parent [blue] node [right] {yes} };
\end{tikzpicture}  
    \end{minipage}
\caption{
Our method only resorts to one-vs-all when we cannot confidently identify a runner-up class. Functions used are identical to the ones from~\citet{cohen_adv_robustness}.} 
    \label{alg:first_step}
\end{figure}

Given the $\hat{c}_A$ and $\hat{c}_B$ 
estimated as the most probable and the runner-up class by our sampling procedure,
the upper and lower bounds of $p_{A}$ and $p_{B}$ are derived by calculating (Bonferroni corrected) Binomial confidence intervals with the Clopper-Pearson test~\citep{clopper_pearson} for each probability separately on another set of $n=100,\!000$ noisy samples\footnote{For computational reasons we use only $n=10,\!000$ for SmoothViT as done by~\citet{carlini2023certified}.}. For cases where no runner-up class was given, we resume to the approach of \citet{cohen_adv_robustness}.
The full estimation procedure is described by \Cref{alg:predict} and \Cref{alg:certify}, where the former estimates the class ranking which is used in the latter for estimating the certified radius. 
\textsc{SampleUnderNoise}$(f, x, \sigma, n_0)$ is a function creating $n_0$ noisy images $x+ \epsilon$ with $\epsilon\sim \mathcal{N}(0, \sigma ^2)$, using classifier $f$ to predict the class label of those, 
and returning a vector \texttt{counts}, which stores the number of class counts for each possible class;
\textsc{BinomPValue}$(n_1, n)$ calculates the p-value for a Binomial distribution used for testing if the probability of success is bigger than $0.5$ 
where $n_1$ is the number of ``successful'' trials
and $n$ the number of total observations;  and
\textsc{Upper/LowerConfInterval}$(n_A, n, \alpha)$ returns the upper/lower confidence bound for the probability of success of a Binomial distribution when observing $n_A$ successes with the Clopper-Pearson confidence interval.

\begin{algorithm}[t]
\caption{\textsc{Predict}$(f, n_0, \alpha, \sigma)$}
\begin{algorithmic}
\State \texttt{counts}$ \gets $ \textsc{SampleUnderNoise}$(f, x, \sigma, n_0)$
\State $n_A, n_B, n_C \gets$ top three entries in \texttt{counts} 
\If{\textsc{BinomPValue}$(n_A, n_A+n_B) \leq \alpha$}
    \Return predict $\hat{c}_A$, which is the class belonging to $n_A$
    \If{\textsc{BinomPValue}$(n_B, n_B+n_C) \leq \alpha$}
        \Return runner up class $\hat{c}_B$, which belongs to $n_B$
    \Else{ $\hat{c}_B$= NaN}
    \EndIf
\Else{ $\hat{c}_A$= NaN}
\EndIf
\end{algorithmic}
\label{alg:predict}
\end{algorithm} 

\begin{algorithm}
\caption{\textsc{Certify}$(n_0, n, \alpha, \sigma, f$)}
\begin{algorithmic}
\State $\hat{c}_A$, $\hat{c}_B$ $\gets$ \textsc{Predict}$(f, n_0, \alpha, \sigma)$
\If{$\hat{c}_A \neq $ NaN}
    \State \texttt{counts}$ \gets $ \textsc{SampleUnderNoise}$(f, x, \sigma, n)$
    \State $n_A \gets $ \texttt{counts}$[\hat{c}_A]$
    \If{$\hat{c}_B \neq $ NaN}
        \State $n_B \gets $ \texttt{counts}$[\hat{c}_B]$
        \State $\underline{p_A} \gets $ $\textsc{LowerConfInterval}( n_A, n, \alpha/2)$ 
        \State $\overline{p_B} \gets $ $\textsc{UpperConfInterval}( n_B, n, \alpha/2)$ 
        \If{$\underline{p_A} + \overline{p_B} > 1$}
            \State $\underline{p_A} \gets $ $\textsc{LowerConfInterval}( n_A, n, \alpha)$ 
            \State $\overline{p_B} \gets  1 - \underline{p_A}$ 
        \EndIf
    \ElsIf{$\hat{c}_B = $ NaN}
    \State $\underline{p_A} \gets $ $\textsc{LowerConfInterval}( n_A, n,\alpha)$ 
    \State $\overline{p_B} \gets  1 - \underline{p_A}$ 
    \EndIf
\Else{ Abstain}
\EndIf \\
\Return $R=\frac{\sigma}{2} (\Phi^{-1}(\underline{p_A}) - \Phi^{-1}(\overline{p_B}))$ \textbf{if} $R>0$
\end{algorithmic}
\label{alg:certify}
\end{algorithm}

In \Cref{subsec:eval_strategy} we evaluate how this changed approximation procedure impacts the certified robustness radius of Cohen et al.. 
The same sample approximation strategy can straight forwardly be applied to estimate $\RC$ and $\RNCM$. For estimating the former it can be directly applied for $f^*_{ u_f, \theta}$  and its extended set of classes $\mathcal{\tilde{Y}}$. For estimating $\RNCM$ the label space is modified by uniting the predicted and the uncertainty class.

Now, that we explained our approach and sampling procedure, we can also describe the related work by 
\citet{sheikholeslami2022denoised} that also leveraged a rejection-based classifier in the randomized smoothing setting 
in more detail. Here, the rejection class, which is similar in spirit to our uncertain class but based on an additional classifier and not on evaluating the uncertainty of the base classifier) was only defined for the base classifier and not used in the smoothed counterpart.  The purpose here was to derive better estimates of the probability
$p_{c_{B, \theta}}$ by approximating it through $1 - (\underline{p_{c_{A, \theta}}+ p_{v_{\theta}}})$. 

\subsection{Statistical guarantees}
The following proposition shows that by using this procedure we inherit the statistical guarantees of the original work. 
\begin{proposition}
Let \textsc{Predict} and \textsc{Certify} be as defined in Algorithm~\ref{alg:predict} and \ref{alg:certify}.  Then it follows:
\begin{enumerate}[label=(\alph*)]
    \item The probability that a class other than $g(x)$ or `NaN' is predicted by \textsc{Predict} is at most $\alpha$.
    \item The error of misidentifying the winner and/or the runner up class in \textsc{Predict} is at most $\alpha$.
    \item From~\citet{cohen_adv_robustness}: With probability at least $1-\alpha$ over the randomness in \textsc{Certify}, if \textsc{Certify} returns a radius $R$, the robustness guarantee $g(x+ \delta) = \hat{c}_A$ is given whenever $\| \delta \|_2 < R$. 
\end{enumerate}
\end{proposition}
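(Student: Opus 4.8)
The plan is to reduce all three claims to the analysis of \citet{cohen_adv_robustness} combined with the rank-verification guarantees of \citet{Hung2016RankVF}, isolating the simultaneous winner/runner-up control as the genuinely new ingredient. The unifying observation I would use throughout is that, conditioned on the number of draws landing in a given pair of classes $\{c,c'\}$, the count for $c$ is Binomial with parameter $p_c/(p_c+p_{c'})$, so the one-sided \textsc{BinomPValue} test at level $\alpha$ is an exact test of $p_c\le p_{c'}$ versus $p_c>p_{c'}$.

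For (a), I would argue as in the PREDICT analysis of Cohen et al. Let $c^\star=g(x)$ be the true top class. The only way \textsc{Predict} outputs some $\hat c_A\notin\{c^\star,\text{NaN}\}$ is that a class $c\neq c^\star$ attains the largest empirical count and then passes \textsc{BinomPValue}$(n_A,n_A+n_B)\le\alpha$. By the conditional-Binomial observation this test rejects a truly non-dominant class with probability at most $\alpha$, and since at most one class can be the empirical winner no union bound over classes is needed---this is precisely the content of the unadjusted rank test of \citet{Hung2016RankVF}---so the claimed bound follows.

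Claim (b) is where the main obstacle lies, and here I would invoke the rank-verification theorem of \citet{Hung2016RankVF} rather than reprove it. The difficulty is that the procedure runs two sequential unadjusted tests, certifying the winner via $n_A$ versus $n_B$ and then the runner-up via $n_B$ versus $n_C$, each at level $\alpha$, yet we must bound the \emph{combined} false-assertion probability by $\alpha$ with no Bonferroni correction. The reason this is not additive is the step-down structure: the runner-up test is only performed after the winner has been certified, and the exponential-family structure of the multinomial counts that Hung and Fithian exploit lets successive pairwise comparisons verify a partial (top-two) ordering while controlling the family-wise error at the nominal level. Concretely I would match our two-stage test to their top-$k$ verification scheme, check that the null hypotheses ``$\hat c_A$ is not the true winner'' and ``$\hat c_B$ is not the true runner-up'' coincide with theirs, and conclude that the probability of any false statement about the top-two ordering is at most $\alpha$.

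For (c) I would reuse Cohen et al.'s CERTIFY argument and verify it survives the branching in Algorithm~\ref{alg:certify}. The common core is that Clopper--Pearson intervals are exact, so on the fresh $n$ samples one has $\underline{p_A}\le p_{\hat c_A}$ and $\overline{p_B}\ge p_{\hat c_B}$ with the stated coverage; \Cref{thm:cohen} applied with these bounds then yields $g(x+\delta)=\hat c_A$ for all $\|\delta\|_2<R$ whenever $R>0$. In the one-vs-all branch only the lower bound is random and $\overline{p_B}=1-\underline{p_A}\ge\max_{c\neq\hat c_A}p_c$ holds deterministically, so level $\alpha$ suffices; in the two-class branch the two bounds are combined by a union bound, explaining the level $\alpha/2$ on each, and the guard $\underline{p_A}+\overline{p_B}>1$ reverts to the safe one-vs-all bound exactly when the separate estimates become mutually inconsistent. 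The one point I would flag and reconcile is that in the two-class branch $\overline{p_B}$ only upper-bounds $p_{\hat c_B}$, so bounding $\max_{c\neq\hat c_A}p_c$ requires the correct-runner-up event supplied by (b); I would therefore state (c) as inheriting Cohen et al.'s guarantee conditionally on the identification results (a)--(b), as in the original framework.
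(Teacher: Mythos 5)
Your proposal is correct and follows essentially the same route as the paper's proof: part (a) is inherited from the unchanged \textsc{Predict} analysis of \citet{cohen_adv_robustness}, part (b) is an application of Theorem~1 of \citet{Hung2016RankVF} to the top two ranks, and part (c) is a case analysis over the branches of \textsc{Certify}, reverting to the one-vs-all guarantee when $\hat{c}_B$ is NaN or the bounds overlap, and using the Bonferroni-corrected Clopper--Pearson intervals at level $\alpha/2$ otherwise. Your additional observation that the two-class branch of (c) needs $\overline{p_B}$ to bound $\max_{c\neq \hat{c}_A} p_c$, and hence implicitly conditions on the correct runner-up identification from (b), is a legitimate refinement that the paper's terse proof leaves implicit.
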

\begin{proof}
    \textbf{Case (a)} Follows directly from~\citet{cohen_adv_robustness}, as we did not change the procedure in their \textsc{Predict} algorithm to get an estimate of $\hat{c}_A$.\\
    \textbf{Case (b)} follows directly from applying 
     Theorem 1 of \citet{Hung2016RankVF} to only two ranks.\\
    For proving \textbf{case (c)} we can leverage the results from \citet{cohen_adv_robustness}: First, if $\hat{c}_B =$ NaN, then our \textsc{Certify} reduces to the approach from \citet{cohen_adv_robustness}, where the condition $0.5 < \underline{p_A}$ is replaced by the condition, that $R>0$, which is equivalent, since in this setting,
    \begin{align}
        R = & \frac{\sigma}{2} (\Phi^{-1}(\underline{p_A}) - \Phi^{-1}(1 - \underline{p_A}))  = \sigma \cdot \Phi^{-1}(\underline{p_A}) > 0 \nonumber \\
        & \Longleftrightarrow \underline{p_A} > 0.5\enspace . 
    \end{align}
    Second, if $\hat{c}_B \neq$ NaN we distinguish between two cases: if the derived upper and lower bounds sum up to more than 1, we retain the original approach of \citet{cohen_adv_robustness} and hence their guarantee. In the other case we estimate lower and upper bounds based on Bonferroni corrected confidence intervals with $\alpha/2$, such that with probability $1- \alpha$ over the sampling of the noise, $\underline{p_A} \leq \P(f(x+ \epsilon) = \hat{c}_A)$ and $\overline{p_B} \geq \P(f(x+ \epsilon) = \hat{c}_B)$ and $R>0$. 
\end{proof}

\section{Experiments}\label{sec:experiments}
There are several general information about the experiments accompanying the following subsections which we now explain jointly. We refer the reader to the Appendix for more details. 

\subsection{Datasets and models}
All experiments are conducted on the well known CIFAR10~\citep{cifar10} and ImageNet~\citep{imagenet} datasets. We use the first 1,000 samples of the CIFAR10 test set, and every 55th example of the ImageNet test set (as it is ordered according to the different classes) as a validation dataset on which we set the uncertainty threshold $\theta$, such that at most 1\% of benign validation accuracy is lost as done by~\citet{Stutz_CCAT}\footnote{The resulting threshold values and accuracies for all uncertainty functions, models, and datasets are reported in \Cref{sec:appendix_setup}.}. For the actual evaluation, we use every 10th or every 50th example from the respective test sets. 

The second most important aspect of our evaluation is that we did not retrain any of the used models but utilized the provided model checkpoints linked in the GitHub repositories kindly provided by the authors of the respective papers. 
That means, when referring to \textit{RandSmooth} we used the 
ResNet110 model with noise $\sigma = 0.25$ for CIFAR10 and the ResNet50 
with $\sigma = \{0.25, 0.5 \}$ for ImageNet provided by~\citet{cohen_adv_robustness}. With  \textit{SmoothAdv}  we label the experiments based on models trained by~\citet{Salman}, where we chose 
from their extensive model catalog the PGD 2 steps trained ResNet110 model with weight equal to $1.0$ and $\epsilon = 64$ for CIFAR10 and the ResNet50 model trained with the decoupled direction and norm (DDN)~\citep{DDN} approach
with 2 steps, $\epsilon = 255 \text{ or } 1025$, and $\sigma=0.25 \text{ or }0.5$ respectively for ImageNet. 
\textit{SmoothCCAT 0.25/0.3} refers to a combination of the models from~\citet{carlini2023certified} and~\citet{Stutz_CCAT} on CIFAR10 - more precisely, we replaced the classifier originally used in~\citet{carlini2023certified} by the pretrained ResNet-20 CCAT model of~\citet{Stutz_CCAT} evaluated on $\sigma  \in \{0.25,0.3\}$ respectively. This classifier was trained by letting
the prediction in the direction of the nearest adversarial example in the $\ell_{\infty}$-ball of size $0.03$  decay to a uniform distribution. Last but not least, 
we use 
\textit{SmoothViT} to refer to 
a combination of~\citet{carlini2023certified} and the original vision transformer (ViT)~\citep{vit_model}, implemented by~\citet{rw2019timm} and trained on ImageNet. This was inspired by a recent paper~\citep{galil2023framework}, which found that ViT models, especially ViT-L/32-384, excel in OOD detection. 

\begin{table*}[htb]
    \centering
         \resizebox{\linewidth}{!}{
    \begin{tabular}{l c c c c c c c c c c c c c c c}
    \toprule
      &  \multicolumn{15}{c}{CIFAR10}\\
      & 
               $\RC$ & $\Rsup$ & $\RNCM$ & $\RC$ &$\Rsup$ & $\RNCM$ & $\RC$ &$\Rsup$ & $\RNCM$ & $\RC$ &$\Rsup$ & $\RNCM$ & $\RC$ & $\Rsup$ & $\RNCM$  \\
                           \cmidrule(r){2-4} \cmidrule(r){5-7} \cmidrule(r){8-10} \cmidrule(r){11-13} \cmidrule(r){14-16} 
      $\ell_2$-radius & \multicolumn{3}{c}{0.0} & \multicolumn{3}{c}{0.2}  & \multicolumn{3}{c}{0.4} & \multicolumn{3}{c}{0.6} & \multicolumn{3}{c}{0.8}    \\

     \midrule    
    RandSmooth 0.25 & 73.67 & 74.78 & 73.67 &
                62.11 & 62.89 & 63.67 &
                47.89 & 48.11 & 48.89 &
                34.67 & 35.00 & 35.22 &
                21.56 & 21.67 & 21.89\\
    SmoothAdv 0.25 & 76.00 & 76.89 & 76.00 &
                68.33 & 69.67 & 70.00 &
                58.89 & 60.22 & 60.89 &
                48.33 & 49.11 & 49.67 &
                37.89 & 38.22 & 38.22\\
    SmoothCCAT 0.25  & 75.22 & 75.89 & 75.22 &
                62.78 & 64.11 & 64.89 &
                44.11 & 47.78 & 48.78 &
                27.89 & 34.00 & 34.89 &
                14.00 & 23.67 & 24.44 \\
    SmoothCCAT 0.3  & 70.44 & 71.67 & 70.44 &
                58.00 & 60.00 & 61.67 &
                42.44 & 45.44 & 46.67 &
                28.89 & 33.11 & 34.33 &
                19.11 & 24.44 & 25.56 \\
    \midrule
    & \multicolumn{15}{c}{ImageNet}\\
 $\ell_2$-radius  & \multicolumn{3}{c}{0.0} & \multicolumn{3}{c}{0.2}  & \multicolumn{3}{c}{0.4} & \multicolumn{3}{c}{0.6} & \multicolumn{3}{c}{0.8}    \\
            \cmidrule(r){2-4} \cmidrule(r){5-7} \cmidrule(r){8-10} \cmidrule(r){11-13} \cmidrule(r){14-16}  
    RandSmooth 0.25 &68.60 & 69.30 & 68.60 &
            61.50 & 62.00 & 62.00 &
            54.80 & 55.70 & 56.00 &
            48.30 & 48.60 & 48.80 &
            40.20 & 40.50 & 40.60\\
    SmoothAdv 0.25  &  66.20 & 67.20 & 66.20 &
            62.30 & 63.30 & 62.70 &
            58.60 & 59.30 & 59.10 &
            55.10 & 55.60 & 55.70 &
            49.20 & 49.50 & 49.50\\
    \cmidrule(r){2-16}
     $\ell_2$-radius  & \multicolumn{3}{c}{0.0} & \multicolumn{3}{c}{0.2}  & \multicolumn{3}{c}{0.4} & \multicolumn{3}{c}{0.6} & \multicolumn{3}{c}{0.8}    \\
            \cmidrule(r){2-4} \cmidrule(r){5-7} \cmidrule(r){8-10} \cmidrule(r){11-13} \cmidrule(r){14-16}  
    RandSmooth 0.5 & 60.60 & 61.10 & 60.60 &
            56.60 & 56.70 & 56.70 &
            52.00 & 52.40 & 52.20 &
            47.30 & 47.50 & 47.60 &
            43.20 & 43.30 & 43.40 \\
    SmoothAdv 0.5 &  57.80 & 58.60 & 57.80 &
            55.40 & 56.20 & 55.90 &
            52.30 & 52.70 & 52.70 &
            49.50 & 50.10 & 50.20 &
            45.70 & 46.20 & 46.50\\
    SmoothViT 0.5 & 59.50 & 60.30 & 59.50 &
            54.90 & 55.60 & 55.50 &
            49.40 & 50.20 & 50.60 &
            43.70 & 44.20 & 44.60 &
            38.20 & 38.50 & 38.80\\
    \bottomrule 
    \end{tabular} }
    \caption{\textbf{Certified accuracies with respect to $\ell_2$-norm perturbations when using the entropy of the base classifier as uncertainty function.} The higher, the better.}
    \label{tab:cert_acc}
\end{table*}

\subsection{Evaluation of the sampling strategy}
\label{subsec:eval_strategy}
Before analyzing the new robustness guarantees, we set the baseline by investigating how much our approach of deriving an estimate for the runner-up class already improves over the baseline. In \Cref{fig:emp_approx_cifar} we display certified accuracies resulting from the originally used one-vs-all approach (solid lines) vs.  the ones resulting from our approach (dashed lines) described in \Cref{sec:estimates}. We see, that as expected our approach  leads to minor improvements over the baseline. Those benefits result from inputs for which the smoothed classifier places probability mass over 
multiple classes and hence our sample based estimation of $\overline{p_B}$ is tighter than $1-\underline{p_A}$, or from
samples for which the classifier in the original approach abstains 
from predicting due to the constrain $\underline{p_A}>0.5$. 
Given these findings, we recommend practitioners to employ the proposed two-step sampling strategy for deriving robustness guarantees, and we will use our $R$, i.e., the dashed lines, as reference, in the remaining of our paper.

\begin{figure}[htb]
\centering    \includegraphics[width=0.9\linewidth]{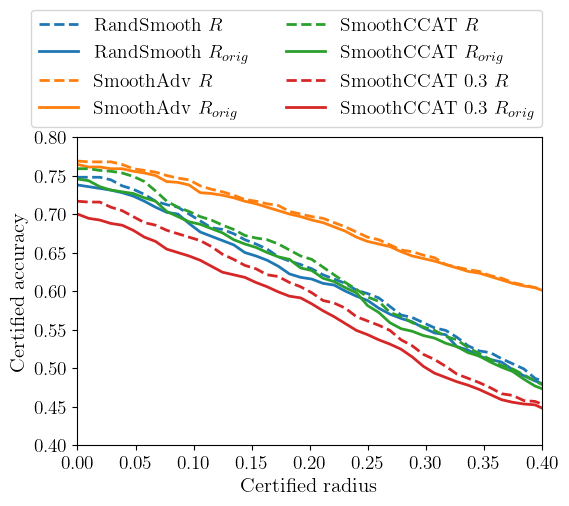}
    \caption{ 
    \textbf{Certified accuracy resulting from the changed certification procedure used for estimating the required bounds.} Models were trained on CIFAR10.
    $R_{orig}$ corresponds to the one-vs-all method of~\citet{cohen_adv_robustness} and $R$ to our proposed scheme with calculating $\overline{p_B}$. A slight increase can be observed for all methods.}
    \label{fig:emp_approx_cifar}
\end{figure}

\subsection{ Robustness evaluation of uncertainty aware classifiers}
\label{subsec:novel_radii}
In this section we investigate the robustness guarantees resulting from the proposed uncertainty aware classifier in practice when employing the entropy of the base classifier as
uncertainty function.
We start by calculating the (approximated) certified test set accuracy for different $\ell_2$-norm perturbation, that is, we estimated the percentage of test samples that were classified correctly, not abstained from predicting, and whose calculated robustness radii was above the $\ell_2$-norm given in the columns. 
Results for the different models and the two data sets are reported in \Cref{tab:cert_acc}.
First, we verify, that the certified accuracies for the standard robustness radius $\Rsup$ are in line with previously reported results. 
Comparing results for $\Rsup$ between models shows that 
for small $\ell_2$-norm perturbations 
the stacked solutions SmoothCCAT and SmoothViT result in comparable high benign accuracy and reasonably high certified accuracies for higher norm perturbations, as it is also visible from the results shown in \Cref{fig:emp_approx_cifar}. This demonstrates the robustness benefits of the novel approaches that combine well-trained classifiers with diffusion models.

We now investigate the certified accuracies resulting from the novel robustness guarantees. First, it can be observed that the benign accuracy (corresponding to the values reported for a  $l_2$-radius of $0.0$) 
for uncertainty aware classifiers is slightly smaller than when not taking uncertainty into account. This is due to examples for which the classifier abstains from making a prediction due to high uncertainty. This effect can also lead to slightly smaller certified accuracies with the uncertainty aware guarantees for smaller values of the  $l_2$-radius. More often however and generally for lager $l_2$-radii, we observe the ordering that we would expect: the certified accuracy for $\RC$ (i.e., the percentage of samples for which it is guaranteed that the prediction in the specified radius stays the same and confident) is smaller than the certified accuracy for $\Rsup$, and the certified accuracy for $\RNCM$ (i.e., the percentage of examples for which it is guaranteed that no wrong class is predicted in the specified radius with low uncertainty) is higher.
The differences are most distinct for SmoothCCAT models. 
This can be explained by the fact that
in contrast to the other methods which were trained to  
make consistent predictions
under Gaussian noise, the CCAT classifier of \citet{Stutz_CCAT} was specifically trained to have a decreasing prediction confidence away from training samples.
This makes larger modifications to input necessary to lead the model in making a wrong (but confident) prediction, and consequently $\RNCM$ is high. At the same time also the area in which predictions are correct and certain is decreased as reflected by a smaller $\RC$.

In summary, the analysis demonstrates that the proposed robustness guarantees allow for a systematic analysis of the robustness properties of uncertainty equipped classifiers.


\begin{table}[htb]
    \centering
         \resizebox{0.7\linewidth}{!}{
    \begin{tabular}{l c c  }
    \toprule
    CIFAR10 &   $\frac{\RC- \Rsup}{\Rsup}$ &$\frac{\RNCM- \Rsup}{\Rsup}$   \\
     \midrule    
    RandSmooth 0.25 & 
                 \phantom{0}-2.67\% & \phantom{0}3.86\% \\
    SmoothAdv 0.25 & 
                 \phantom{0}-2.40\% & \phantom{0}2.19\%\\
    SmoothCCAT 0.25  & 
                     -10.80\% & \phantom{0}6.36\%\\
    SmoothCCAT 0.3  & 
                 \phantom{0}-8.60\%& 13.67\% \\
    \midrule
    ImageNet &&    \\
    \midrule
    RandSmooth 0.25 &
-1.22\% & 1.03 \%  \\
    SmoothAdv 0.25  &  
 -0.81\% & 0.21 \%  \\
    RandSmooth 0.5 & 
 \phantom{-}0.49\% & 1.56 \% \\
    SmoothAdv 0.5 & 
-1.72\% & 0.77 \% \\
    SmoothViT 0.5 &  
-1.31\% & 2.97 \% \\
    \bottomrule 
    \end{tabular} }
    \caption{\textbf{Average per sample change of $\RC$ and $\RNCM$ when compared to $\Rsup$} using the entropy as uncertainty measure. Higher is better. 
}
    \label{tab:radii_comparison}
\end{table}
\subsection{
Sample-wise evaluation of the uncertainty-aware robustness guarantees
} 

When augmenting a classifier with the ability to also output 
uncertainty, one would ideally observe low uncertainty when making the right prediction, 
and observe high uncertainty when predicting another class close to the input. Using such a classifier as the base for the smoothed classifier would translate to having an $\RC$ as large as possible, while at the same time having  a high $\RNCM$.
To get a better picture of the novel guarantees we evaluate their relative difference to the robustness guarantee without leveraging uncertainty on the individual input sample basis. 
Results are shown in Table~\ref{tab:radii_comparison}, and different substantially between models. Its generally observable that models with a relatively large $\RNCM$ have a smaller $\RC$, which indicates a generally larger amount of probability mass on areas belonging to the uncertainty class  under these models.
For some models the relative increase of $\RNCM$ compared to $\Rsup$ is larger than the relative decrease for $\RC$ compared to $\Rsup$. 
For other models it is the other way around. This can have multiple reasons, for example as discussed in \Cref{sec:including_uncertainty}, that the probability mass for the correct class was already close to 1, or that the probability mass of the different classes was shifted unevenly to the uncertainty class. 

Overall, the differences between the radii are more eminent for models trained on CIFAR10 than ImageNet, which can be tailored to the impact of the Gaussian noise on images of different resolutions. Classifying samples of CIFAR10 under the same amount of noise as for a high resolution ImageNet picture is more challenging, and therefore we are expecting more uncertain predictions for noisy versions of CIFAR10 than for ImageNet which in return impacts $\RC$ and $\RNCM$. 

Interestingly, for RandSmooth 0.5 on ImageNet, we observe that $\RC$ actually increases over $\Rsup$, which is tailored to our discussion in section~\ref{sec:including_uncertainty} which explained under which conditions including uncertainty can improve over $\Rsup$. This shall be investigated in the next subsection in greater detail. 

\subsection{Including uncertainty can improve robustness guarantee}
As observed in \Cref{tab:radii_comparison} and stated in Corollary~\ref{cor:uncertainty_robust}, it is  possible that the certified robustness radius of an uncertainty-equipped classifier can even be larger or equal to the certified radius of the original classifier.  In this section, we investigate this phenomenon in more detail. 

\begin{table}[tb]
    \centering
    \resizebox{\linewidth}{!}{
    \begin{tabular}{l c c c c }
    \toprule
    & positive $\uparrow$ &  negative $\downarrow$
    & zero  & total \\
    \midrule
    & \multicolumn{4}{c}{CIFAR10 0.25} \\
    \cmidrule(r){2-5} 
    RandSmooth    & 5.40\% & 44.83\% & 49.78\% &667\\
     SmoothAdv    & 2.19\% & 31.14\% & 66.67\% &684 \\ 
     SmoothCCAT & 6.48\% & 86.75\% & \phantom{0}6.77\% &679 \\
     SmoothCCAT 0.3 &8.20\% & 87.46\% & \phantom{0}4.33\% &646  \\
     \midrule 
     &\multicolumn{4}{c}{ImageNet 0.25} \\ 
     \cmidrule(r){2-5} 
     RandSmooth & 1.45\% & 12.92\% & 85.63\% &689 \\
     SmoothAdv  &  0.60\% & 4.08\% & 95.32\% &662 \\
          \midrule 
     &\multicolumn{4}{c}{ImageNet 0.5} \\ 
     \cmidrule(r){2-5} 
     RandSmooth & 3.12\% & 12.97\% & 83.91\% &609 \\
     SmoothAdv  &  0.69\% & 13.94\% & 85.37\% &581 \\
     SmoothVIT & 6.12\% & 24.96\% & 68.93\% &605 \\
    \bottomrule 
    \end{tabular} 
    }
    \caption{\textbf{
    Percentage of correctly classified samples for which the difference between $\RC$ and 
    $\Rsup$
    is positive, negative, or zero, respectively}.
    We use the entropy of the base classifier as the uncertainty function and chose (approximately) the minimum $\theta$ such that we only lose 1\% benign accuracy on a validation set.} 
    \label{tab:comparison-RCC}
\end{table}
While in the previous subsection, we only looked at the average per sample difference between $\RC$ and $\Rsup$, we now report in Table~\ref{tab:comparison-RCC} how often  
i) $\RC - \Rsup >0$,
ii) $\RC - \Rsup < 0$, and 
iii) $\RC - \Rsup = 0$,
that is, how often the certified radius of the uncertainty-equipped classifier is larger, smaller, or equal to the certified radius of the original classifier. We perform the evaluation
on correctly (and with certainty in the case of the uncertainty-equipped classifier) classified images.
For a huge part of the 
samples 
the certified robustness radius does not change when uncertainty is included in the classifier. 
Inspecting these samples more closely, 
we find that almost no noisy input variations are classified as uncertain (i.e. result in the prediction of the uncertainty class $v_{ \theta}$),
such that including uncertainty does not affect the robustness radius of these data points. 
Naturally, for those samples with increased or decreased radius it holds 
$p_{v_{ \theta}} \neq 0$. It is worth mentioning, that for all datasets and methods there exists a (small) subset of samples for which the robustness radius increases when using uncertainty. 
This demonstrates that the phenomenon discussed in section~\ref{sec:including_uncertainty} indeed occurs in practice and that in future work asymmetric uncertainty can be pushed to excel $\RC$ further.
For the samples with a decreased radius 
we observe that more than half of the samples do not have a runner-up class but if they do, the runner-up class is actually the uncertain class for approximately every fourth sample. In these cases, the eminent threat is in shifting to the uncertain class which might be - depending on the use case - not a failure case. 
In summary, these results show that including an uncertainty function with a reasonable chosen threshold does not need to impact the benign certified radius and that it can even be beneficial. 

\subsection{Evaluating the guarantees for different uncertainty measures and thresholds}
As follows from the definition, an uncertainty-equipped classifier
depends on the uncertainty function used. 
In the experiments discussed in  the previous section, we used the entropy of the base classifier as the uncertainty function. 
To get a feeling for how the choice of the uncertainty function influences the results, we also estimated the certified accuracy  when  using the prediction confidence and the prediction margin as uncertainty functions.
To allow for comparability, $\theta$ was (again) chosen such
that the validation accuracy on a validation set looses at most $1\%$ in all cases. The results are shown in \Cref{fig:radii_uncs}. The prediction confidence and the entropy often lead to similar certified accuracies, up to an degree that resulted in overlapping graphs. For all models the margin achieved the largest certified accuracy with respect to $\RNCM$. At the same time the certified accuracy with respect to  $\RC$ was smaller only for the models RandSmooths and SmoothAdvs. This results demonstrate that the choice of the uncertainty measure clearly impacts the robustness of the smoothed classifier and that some uncertainty measure can lead to a larger accuracy with respect to $\RNCM$ while not impacting the amount of samples being correctly classified with high certainty.

For a closer inspection we also evaluated the novel robustness radii resulting from the additional uncertainty functions on a per-sample bases, see
\Cref{tab:radii_comparison_conf_margin}. 
In contrast to the results of the entropy, here the average per sample increases of $\RNCM$ are always (in the case of the margin) or mostly (for the confidence) higher than the accuracy losses when looking at $\RC$. Remarkably high are the results for $\RNCM$ when using the margin, making this an explicitly well suited uncertainty measure when the eminent thread of adversarial examples is posed by a confident misclassification.

Finally, we conducted an investigation of the dependency of the guarantees from the threshold chosen for identifying predictions as certain. We investigated $\theta$ values resulting in a decrease of the validation
set accuracy of maximally by 0.5\%, 2\% and 5\%, respectively. We show the certified accuracy  for SmoothAdv and different uncertainty functions in \Cref{fig:radii_theta}. As expected, the certified accuracy  w.r.t. $\RNCM$ as well as to $\RC$ approaches $\R$ with increasing threshold. Nevertheless, the robustness differences emerging from different uncertainty functions remain clearly visible.

\begin{table}[htb]
    \centering
         \resizebox{0.9\linewidth}{!}{
    \begin{tabular}{l c c c c }
    \toprule
    & \multicolumn{2}{c}{Confidence} & \multicolumn{2}{c}{Margin}\\
    CIFAR10 &   $\frac{\RC- \Rsup}{\Rsup}$ &$\frac{\RNCM- \Rsup}{\Rsup}$  &   $\frac{\RC- \Rsup}{\Rsup}$ &$\frac{\RNCM- \Rsup}{\Rsup}$  \\
     \midrule    
    RandSmooth 0.25 &  -3.82\% & 6.66\% & 
 -9.0\% & 14.27\% \\
    SmoothAdv 0.25 & 
                 -3.85\% & 4.55\% & 
 -7.44\% & 10.66\% \\
    SmoothCCAT 0.25  &-9.66\% & 8.14\% & 
 -14.38\% & 15.52\%\\
    SmoothCCAT 0.3  & 
        -9.83\% & 17.33\% & 
 -11.75\% & 20.93\% \\
    \midrule
    ImageNet &&    \\
    \midrule
    RandSmooth 0.25 &
 -1.09\% & 1.62\% & 
 -4.98\% & 9.17\%  \\
    SmoothAdv 0.25  &  
  -1.37\% & 1.01\% & 
 -4.29\% & 6.78\% \\
    RandSmooth 0.5 & 
 -0.56\% & 2.23\% & 
 -3.74\% & 9.57\%\\
    SmoothAdv 0.5 & 
-1.28\% & 0.87\% & 
 -4.71\% & 7.12\% \\
    SmoothViT 0.5 &  
 -2.36\% & 6.09\% & 
 -1.40\% & 13.23\% \\
    \bottomrule 
    \end{tabular} }
    \caption{\textbf{Average per sample change of $\RC$ and $\RNCM$ when compared to $\Rsup$.} Higher is better. Corresponding results for the entropy are given in \Cref{tab:radii_comparison}.}
\label{tab:radii_comparison_conf_margin}
\end{table}

\begin{figure}[htb]
\subfloat[RandSmooth $\sigma = 0.25$]{\includegraphics[width= 0.245\textwidth]{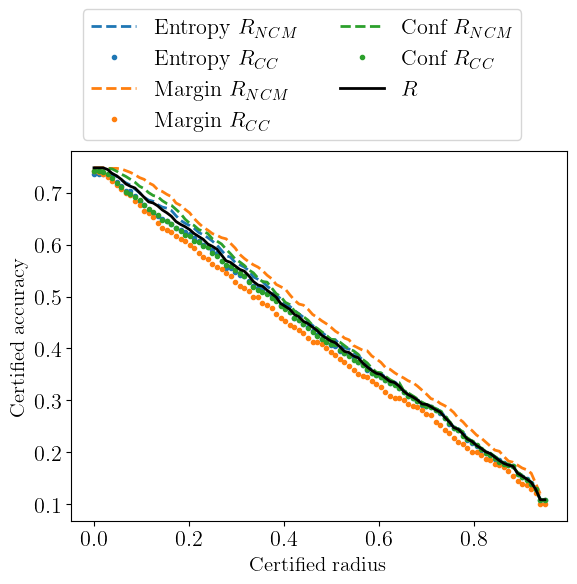}}
\subfloat[SmoothAdv $\sigma = 0.25$]{\includegraphics[width= 0.245\textwidth]{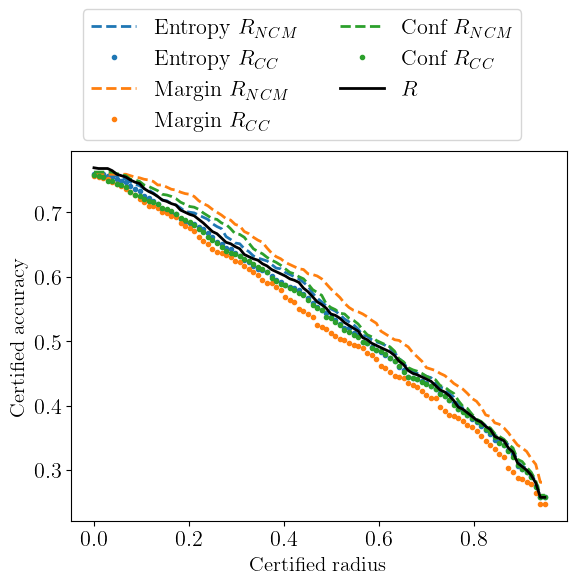}}
\quad
\subfloat[SmoothCCAT $\sigma = 0.25$]{\includegraphics[width= 0.245\textwidth]{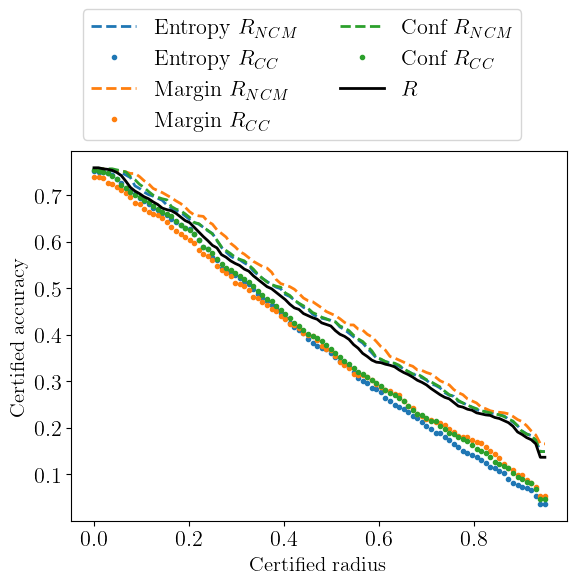}}
\subfloat[SmoothCCAT 0.3 $\sigma = 0.25$]{\includegraphics[width= 0.245\textwidth]{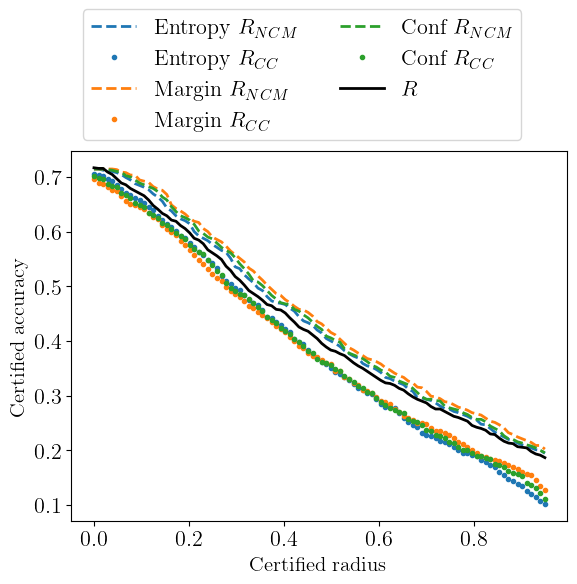}}
\caption{\textbf{The effect of
different uncertainty functions} on the certified accuracy, where each threshold is set such that at most 1\% benign validation set accuracy is lost.
}
\label{fig:radii_uncs}
\end{figure}

\begin{figure}[htb]
\subfloat[SmoothAdv 0.25 Confidence]{\includegraphics[width= 0.245\textwidth]{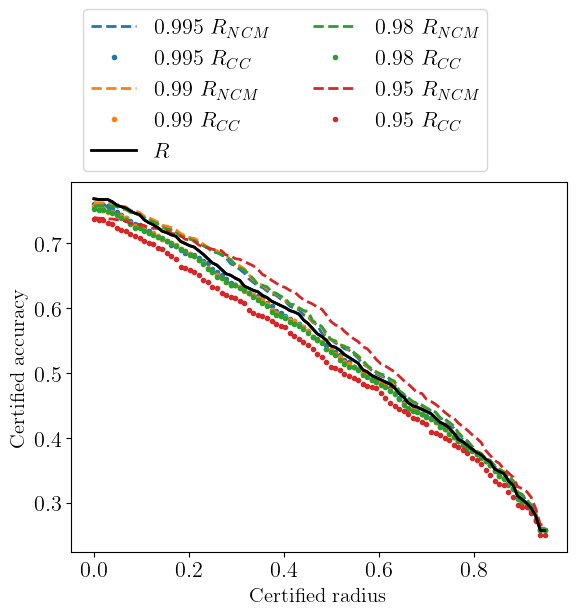}}
\subfloat[SmoothAdv 0.25 Entropy]{\includegraphics[width= 0.245\textwidth]{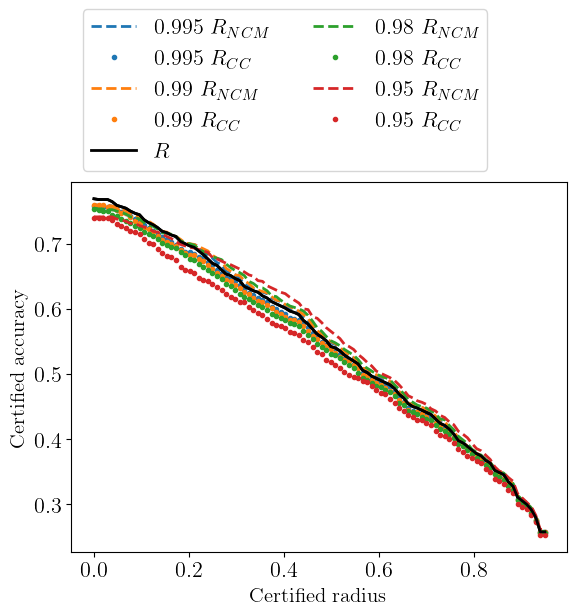}}
\quad
\centering
\subfloat[SmoothAdv 0.25 Margin]{\includegraphics[width= 0.245\textwidth]{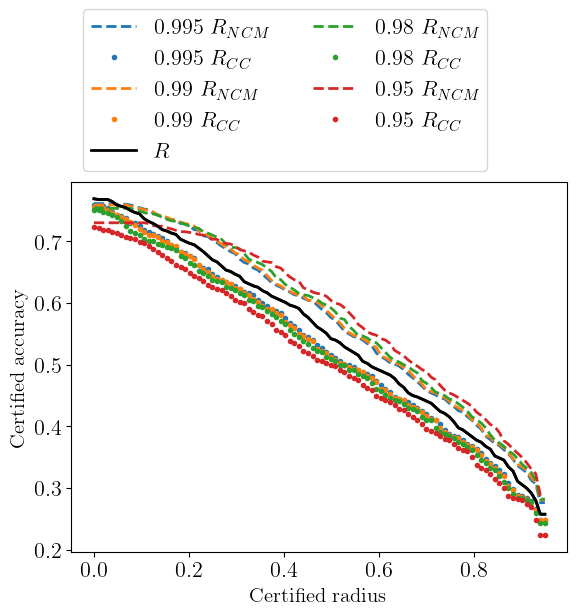}}
\caption{\textbf{The effect of different $\theta$ values} on the certified accuracy for SmoothAdv with different uncertainty functions. $\theta$ values are chosen to allow for varying percentage of benign accuracy decreases.}
\label{fig:radii_theta}
\end{figure}

\subsection{Demonstrating advantages of uncertainty equipped smoothed classifiers on OOD data}
\label{subsec:ood}
In the previous sections, we have 
just looked at correctly classified in-distribution (ID) test samples. The typical application for uncertainty investigates the out-of-distribution (OOD) performance.  
There are three aspects particularly interesting in this setting which are depicted in Figure~\ref{fig:ood}: how often do we abstain from making a prediction (\textcolor{tab_red}{\ding{58}}), how often is the uncertain class 
the predicted class (\textcolor{tab_blue}{$\bigstar$}), and how often is the uncertain class the runner-up class (\LARGE{\textcolor{tab_orange}{$\bullet$}}\normalsize ) in comparison to the ID data. For this comparison, we used CIFAR100 as OOD data, and only selected images labeled as \textit{telephone} or \textit{keyboard} for a more visible distinction to CIFAR10 in-distribution data. This results in an OOD set size of 200. \Cref{fig:ood} plots on the x-axis the fraction of OOD samples against the fraction of the ID samples on the y-axis. This result is based on  RandSmooth on CIFAR10 with the margin as an uncertainty function but similar results are also archived with the other methods (compare \Cref{sec:appendix_experiments}). 
We see that on OOD data we get more often an ``uncertain'' prediction and also the runner-up class is more often  ``uncertain''. Additionally, we observe that there are fewer cases where $p_{v_{\theta}}(x) = 0$ (\LARGE{\textcolor{tab_green}{$\blacktriangledown$}}\normalsize) and where we do not get an estimate of $\hat{c}_B$ from \textsc{Certify}. 
This plot shows, that including uncertainty does not only increase the radius in which it is guaranteed that a shift of the input does not lead to the confident prediction of another class, but also transfers desired properties of the base classifier over to the smoothed classifier, hence making the smoothed classifier also more robust with regard to OOD data.
Note that this is not given if we adopt the framework of \citet{sheikholeslami2022denoised} where the smoothed classifier does not have an uncertain class.

\begin{figure}
    \centering
    \includegraphics[width= 0.4\textwidth]{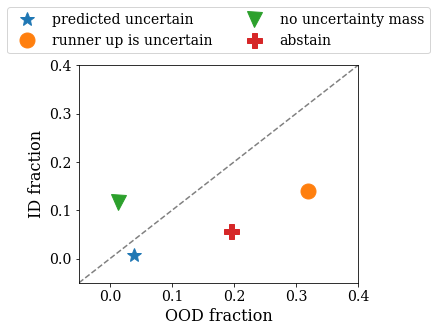}
    \caption{
    \textbf{Our approach is more likely to abstain and/or predict the uncertain class on OOD (vs. ID) examples.}
    CIFAR10 for RandSmooth: fraction of samples in the ID and OOD dataset where the uncertain class is predicted, the runner-up class is uncertain, no runner-up class was identified, the uncertain class has no probability mass, and where we abstain from making a prediction. 
    }
    \label{fig:ood}
\end{figure}

\section{Limitations of our work.}\label{sec:limitations}
Our work is based on~\citet{cohen_adv_robustness} and therefore inherits the limitations of their approach, too. That is, to identify and calculate the upper and lower bounds on the probabilities we need to perform extensive sampling which makes the approach computationally costly. However, there are possible ways to speed this process up for example by the iterative sampling procedure proposed by~\citet{mark2022boosting}. Furthermore, we believe that all results can be improved by including the new insights about uncertainty of the base classifier directly into the training. However, this would again result in significant training costs and we already can prove our point in existing methods which makes our results even stronger because benefits are general and not the result of specific engineering. Besides the computational side, we would like to highlight that certified robustness with randomized smoothing gives a high probability guarantee, but there is still a chance left for making a mistake.

\section{Conclusion}\label{sec:conclusion}
In this work, we have merged two approaches developed for increasing network robustness: certified robustness via randomized smoothing and uncertainty-based rejection of predictions. 

By equipping the base classifier with an uncertainty function before smoothing,
we propose a new certification scheme that leads to the derivation of two novel robustness guarantees. 
The first, $\RC$, specifies an $\ell_2$-ball around an input in which predictions will not change and uncertainty remains low. The second, $\RNCM$, 
specifies a region in which predictions will not be wrong (assuming the input to be classified correctly) and indicate low uncertainty. 
The former provides guarantees against standard adversarial attacks as well as those designed to push examples to uncertain regions. It is surprisingly often as large as the standard robustness radius and sometimes even larger while guaranteeing not only consistent but also confident predictions. 
The latter is of interest for safety-critical applications and is 
up to 20.93\% larger than the standard robustness radius for classifiers not allowing for uncertainty based rejection.

This novel robustness framework allows us to
identify desirable properties of uncertainty functions, namely, asymmetry in the sense of less uncertainty in regions close to the input corresponding to the same class  and higher uncertainty in areas corresponding to different classes. 
Moreover, by incorporating uncertainty into the smoothed classifiers we get increased OOD robustness of the smoothed classifier for free. 

The introduced robustness certificates represent
a mathematically grounded way to formally
compare the robustness resulting from different network architectures and uncertainty measures and thus are useful tools for a principled robustness evaluation in future work. 
Given recent advantages in diffusion models and the possibility to get ``certified robustness for free''~\citep{carlini2023certified} we can combine more powerful uncertainty models with denoising operations. 
We expect the first results presented here to extensively improve with advances in both research fields. 

\section*{Acknowledgment}
This work is funded by the Deutsche Forschungsgemeinschaft (DFG, German Research Foundation) under Germany’s Excellence Strategy - EXC 2092 CASA - 390781972.

\bibliographystyle{abbrvnat}
\bibliography{library}

\begin{thebibliography}{72}
\providecommand{\natexlab}[1]{#1}
\providecommand{\url}[1]{\texttt{#1}}
\expandafter\ifx\csname urlstyle\endcsname\relax
  \providecommand{\doi}[1]{doi: #1}\else
  \providecommand{\doi}{doi: \begingroup \urlstyle{rm}\Url}\fi

\bibitem[Abdar et~al.(2021)Abdar, Pourpanah, Hussain, Rezazadegan, Liu, Ghavamzadeh, Fieguth, Cao, Khosravi, Acharya, Makarenkov, and Nahavandi]{ABDAR2021243}
M.~Abdar, F.~Pourpanah, S.~Hussain, D.~Rezazadegan, L.~Liu, M.~Ghavamzadeh, P.~Fieguth, X.~Cao, A.~Khosravi, U.~R. Acharya, V.~Makarenkov, and S.~Nahavandi.
\newblock A review of uncertainty quantification in deep learning: Techniques, applications and challenges.
\newblock \emph{Information Fusion}, 76:\penalty0 243--297, 2021.
\newblock ISSN 1566-2535.
\newblock \doi{https://doi.org/10.1016/j.inffus.2021.05.008}.
\newblock URL \url{https://www.sciencedirect.com/science/article/pii/S1566253521001081}.

\bibitem[Athalye et~al.(2018)Athalye, Carlini, and Wagner]{obfuscated-gradients}
A.~Athalye, N.~Carlini, and D.~Wagner.
\newblock Obfuscated gradients give a false sense of security: Circumventing defenses to adversarial examples.
\newblock In \emph{Proceedings of the 35th International Conference on Machine Learning, {ICML}}, July 2018.
\newblock URL \url{https://arxiv.org/abs/1802.00420}.

\bibitem[Carlini et~al.(2023)Carlini, Tramer, Dvijotham, Rice, Sun, and Kolter]{carlini2023certified}
N.~Carlini, F.~Tramer, K.~D. Dvijotham, L.~Rice, M.~Sun, and J.~Z. Kolter.
\newblock (certified!!) adversarial robustness for free!
\newblock In \emph{The Eleventh International Conference on Learning Representations}, 2023.

\bibitem[Clopper and Pearson(1934)]{clopper_pearson}
C.~J. Clopper and E.~S. Pearson.
\newblock The use of confidence or fiducial limits illustrated in the case of the binomial.
\newblock \emph{Biometrika}, 26\penalty0 (4):\penalty0 404--413, 1934.
\newblock ISSN 00063444.
\newblock URL \url{http://www.jstor.org/stable/2331986}.

\bibitem[Cohen et~al.(2019)Cohen, Rosenfeld, and Kolter]{cohen_adv_robustness}
J.~Cohen, E.~Rosenfeld, and Z.~Kolter.
\newblock Certified adversarial robustness via randomized smoothing.
\newblock In K.~Chaudhuri and R.~Salakhutdinov, editors, \emph{Proceedings of the 36th International Conference on Machine Learning}, volume~97 of \emph{Proceedings of Machine Learning Research}, pages 1310--1320. PMLR, 09--15 Jun 2019.

\bibitem[Daza et~al.(2021)Daza, P{\'e}rez, and Arbel{\'a}ez]{Daza2021}
L.~Daza, J.~C. P{\'e}rez, and P.~Arbel{\'a}ez.
\newblock Towards robust general medical image segmentation.
\newblock In M.~de~Bruijne, P.~C. Cattin, S.~Cotin, N.~Padoy, S.~Speidel, Y.~Zheng, and C.~Essert, editors, \emph{Medical Image Computing and Computer Assisted Intervention -- MICCAI}. Springer International Publishing, 2021.

\bibitem[Deng et~al.(2009)Deng, Dong, Socher, Li, Li, and Fei-Fei]{imagenet}
J.~Deng, W.~Dong, R.~Socher, L.-J. Li, K.~Li, and L.~Fei-Fei.
\newblock Imagenet: A large-scale hierarchical image database.
\newblock In \emph{IEEE Conference on Computer Vision and Pattern Recognition}, pages 248--255, 2009.
\newblock \doi{10.1109/CVPR.2009.5206848}.

\bibitem[Dhariwal and Nichol(2021)]{diff_imagenet}
P.~Dhariwal and A.~Nichol.
\newblock Diffusion models beat gans on image synthesis.
\newblock In M.~Ranzato, A.~Beygelzimer, Y.~Dauphin, P.~Liang, and J.~W. Vaughan, editors, \emph{Advances in Neural Information Processing Systems}, volume~34, pages 8780--8794. Curran Associates, Inc., 2021.
\newblock URL \url{https://proceedings.neurips.cc/paper_files/paper/2021/file/49ad23d1ec9fa4bd8d77d02681df5cfa-Paper.pdf}.

\bibitem[Dosovitskiy et~al.(2021)Dosovitskiy, Beyer, Kolesnikov, Weissenborn, Zhai, Unterthiner, Dehghani, Minderer, Heigold, Gelly, Uszkoreit, and Houlsby]{vit_model}
A.~Dosovitskiy, L.~Beyer, A.~Kolesnikov, D.~Weissenborn, X.~Zhai, T.~Unterthiner, M.~Dehghani, M.~Minderer, G.~Heigold, S.~Gelly, J.~Uszkoreit, and N.~Houlsby.
\newblock An image is worth 16x16 words: Transformers for image recognition at scale.
\newblock In \emph{International Conference on Learning Representations}, 2021.

\bibitem[Feinman et~al.(2017)Feinman, Curtin, Shintre, and Gardner]{Feinman2017}
R.~Feinman, R.~R. Curtin, S.~Shintre, and A.~B. Gardner.
\newblock Detecting adversarial samples from artifacts, 2017.

\bibitem[Gal and Ghahramani(2016{\natexlab{a}})]{Gal2016}
Y.~Gal and Z.~Ghahramani.
\newblock Dropout as a bayesian approximation: Representing model uncertainty in deep learning.
\newblock In \emph{Proceedings of the 33rd International Conference on International Conference on Machine Learning}, volume~48, pages 1050--1059, 2016{\natexlab{a}}.

\bibitem[Gal and Ghahramani(2016{\natexlab{b}})]{MonteCarloDropout}
Y.~Gal and Z.~Ghahramani.
\newblock Dropout as a bayesian approximation: Representing model uncertainty in deep learning.
\newblock In M.~F. Balcan and K.~Q. Weinberger, editors, \emph{Proceedings of The 33rd International Conference on Machine Learning}, volume~48 of \emph{Proceedings of Machine Learning Research}, pages 1050--1059, New York, New York, USA, 20--22 Jun 2016{\natexlab{b}}. PMLR.
\newblock URL \url{https://proceedings.mlr.press/v48/gal16.html}.

\bibitem[Galil and El-Yaniv(2021)]{uncertainty_attacks}
I.~Galil and R.~El-Yaniv.
\newblock Disrupting deep uncertainty estimation without harming accuracy.
\newblock In M.~Ranzato, A.~Beygelzimer, Y.~Dauphin, P.~Liang, and J.~W. Vaughan, editors, \emph{Advances in Neural Information Processing Systems}, volume~34, pages 21285--21296. Curran Associates, Inc., 2021.
\newblock URL \url{https://proceedings.neurips.cc/paper/2021/file/b1b20d09041289e6c3fbb81850c5da54-Paper.pdf}.

\bibitem[Galil et~al.(2023)Galil, Dabbah, and El-Yaniv]{galil2023framework}
I.~Galil, M.~Dabbah, and R.~El-Yaniv.
\newblock A framework for benchmarking class-out-of-distribution detection and its application to imagenet.
\newblock In \emph{The Eleventh International Conference on Learning Representations}, 2023.

\bibitem[Geifman et~al.(2019)Geifman, Uziel, and El-Yaniv]{geifman2018biasreduced}
Y.~Geifman, G.~Uziel, and R.~El-Yaniv.
\newblock Bias-reduced uncertainty estimation for deep neural classifiers.
\newblock In \emph{International Conference on Learning Representations}, 2019.

\bibitem[Gowal et~al.(2019)Gowal, Dvijotham, Stanforth, Bunel, Qin, Uesato, Arandjelovic, Mann, and Kohli]{gowal_IBP}
S.~Gowal, K.~Dvijotham, R.~Stanforth, R.~Bunel, C.~Qin, J.~Uesato, R.~Arandjelovic, T.~A. Mann, and P.~Kohli.
\newblock Scalable verified training for provably robust image classification.
\newblock In \emph{IEEE/CVF International Conference on Computer Vision (ICCV)}, pages 4841--4850, 2019.
\newblock \doi{10.1109/ICCV.2019.00494}.

\bibitem[Guo et~al.(2017)Guo, Pleiss, Sun, and Weinberger]{calibration_guo}
C.~Guo, G.~Pleiss, Y.~Sun, and K.~Q. Weinberger.
\newblock On calibration of modern neural networks.
\newblock In D.~Precup and Y.~W. Teh, editors, \emph{Proceedings of the 34th International Conference on Machine Learning}, volume~70 of \emph{Proceedings of Machine Learning Research}, pages 1321--1330. PMLR, 06--11 Aug 2017.
\newblock URL \url{https://proceedings.mlr.press/v70/guo17a.html}.

\bibitem[Hein and Andriushchenko(2017)]{hein_formal_2017}
M.~Hein and M.~Andriushchenko.
\newblock Formal guarantees on the robustness of a classifier against adversarial manipulation.
\newblock In I.~Guyon, U.~V. Luxburg, S.~Bengio, H.~Wallach, R.~Fergus, S.~Vishwanathan, and R.~Garnett, editors, \emph{Advances in Neural Information Processing Systems}, volume~30. Curran Associates, Inc., 2017.
\newblock URL \url{https://proceedings.neurips.cc/paper_files/paper/2017/file/e077e1a544eec4f0307cf5c3c721d944-Paper.pdf}.

\bibitem[Hein et~al.(2019)Hein, Andriushchenko, and Bitterwolf]{hein_relu_overconfident}
M.~Hein, M.~Andriushchenko, and J.~Bitterwolf.
\newblock Why relu networks yield high-confidence predictions far away from the training data and how to mitigate the problem.
\newblock In \emph{2019 IEEE/CVF Conference on Computer Vision and Pattern Recognition (CVPR)}, pages 41--50, 2019.
\newblock \doi{10.1109/CVPR.2019.00013}.

\bibitem[Hendrycks and Dietterich(2019)]{hendrycks2019robustness}
D.~Hendrycks and T.~Dietterich.
\newblock Benchmarking neural network robustness to common corruptions and perturbations.
\newblock \emph{Proceedings of the International Conference on Learning Representations}, 2019.

\bibitem[Hendrycks and Gimpel(2016)]{Hendrycks2016}
D.~Hendrycks and K.~Gimpel.
\newblock A baseline for detecting misclassified and out-of-distribution examples in neural networks, 2016.

\bibitem[Hendrycks et~al.(2021)Hendrycks, Zhao, Basart, Steinhardt, and Song]{hendrycks2021nae}
D.~Hendrycks, K.~Zhao, S.~Basart, J.~Steinhardt, and D.~Song.
\newblock Natural adversarial examples.
\newblock In \emph{IEEE Conference on Computer Vision and Pattern Recognition}, 2021.

\bibitem[Horv{\'a}th et~al.(2022)Horv{\'a}th, Mueller, Fischer, and Vechev]{mark2022boosting}
M.~Z. Horv{\'a}th, M.~N. Mueller, M.~Fischer, and M.~Vechev.
\newblock Boosting randomized smoothing with variance reduced classifiers.
\newblock In \emph{International Conference on Learning Representations}, 2022.

\bibitem[Hung and Fithian(2019)]{Hung2016RankVF}
K.~Hung and W.~Fithian.
\newblock Rank verification for exponential families.
\newblock \emph{The Annals of Statistics}, 2019.

\bibitem[Jeong and Shin(2020)]{consistency_jeong}
J.~Jeong and J.~Shin.
\newblock Consistency regularization for certified robustness of smoothed classifiers.
\newblock In H.~Larochelle, M.~Ranzato, R.~Hadsell, M.~Balcan, and H.~Lin, editors, \emph{Advances in Neural Information Processing Systems}, volume~33, pages 10558--10570. Curran Associates, Inc., 2020.
\newblock URL \url{https://proceedings.neurips.cc/paper/2020/file/77330e1330ae2b086e5bfcae50d9ffae-Paper.pdf}.

\bibitem[Jeong et~al.(2021)Jeong, Park, Kim, Lee, Kim, and Shin]{jeong2021smoothmix}
J.~Jeong, S.~Park, M.~Kim, H.-C. Lee, D.~Kim, and J.~Shin.
\newblock Smoothmix: Training confidence-calibrated smoothed classifiers for certified robustness.
\newblock In A.~Beygelzimer, Y.~Dauphin, P.~Liang, and J.~W. Vaughan, editors, \emph{Advances in Neural Information Processing Systems}, 2021.

\bibitem[Jia et~al.(2020)Jia, Cao, Wang, and Gong]{jia2020certified_top_k}
J.~Jia, X.~Cao, B.~Wang, and N.~Z. Gong.
\newblock Certified robustness for top-k predictions against adversarial perturbations via randomized smoothing.
\newblock In \emph{International Conference on Learning Representations}, 2020.

\bibitem[Klingner et~al.(2020)Klingner, Bär, and Fingscheidt]{Klingner2020}
M.~Klingner, A.~Bär, and T.~Fingscheidt.
\newblock Improved noise and attack robustness for semantic segmentation by using multi-task training with self-supervised depth estimation.
\newblock In \emph{IEEE/CVF Conference on Computer Vision and Pattern Recognition Workshop (CVPRW)}, 2020.

\bibitem[Kotelevskii et~al.(2022)Kotelevskii, Artemenkov, Fedyanin, Noskov, Fishkov, Shelmanov, Vazhentsev, Petiushko, and Panov]{NEURIPS2022_eb7389b0}
N.~Kotelevskii, A.~Artemenkov, K.~Fedyanin, F.~Noskov, A.~Fishkov, A.~Shelmanov, A.~Vazhentsev, A.~Petiushko, and M.~Panov.
\newblock Nonparametric uncertainty quantification for single deterministic neural network.
\newblock In S.~Koyejo, S.~Mohamed, A.~Agarwal, D.~Belgrave, K.~Cho, and A.~Oh, editors, \emph{Advances in Neural Information Processing Systems}, volume~35, pages 36308--36323. Curran Associates, Inc., 2022.
\newblock URL \url{https://proceedings.neurips.cc/paper_files/paper/2022/file/eb7389b039655fc5c53b11d4a6fa11bc-Paper-Conference.pdf}.

\bibitem[Krizhevsky and Hinton(2009)]{cifar10}
A.~Krizhevsky and G.~Hinton.
\newblock Learning multiple layers of features from tiny images.
\newblock Technical Report~0, University of Toronto, Toronto, Ontario, 2009.

\bibitem[Kumar et~al.(2022)Kumar, Ma, Liang, and Raghunathan]{pmlr-v180-kumar22a}
A.~Kumar, T.~Ma, P.~Liang, and A.~Raghunathan.
\newblock Calibrated ensembles can mitigate accuracy tradeoffs under distribution shift.
\newblock In J.~Cussens and K.~Zhang, editors, \emph{Proceedings of the Thirty-Eighth Conference on Uncertainty in Artificial Intelligence}, volume 180 of \emph{Proceedings of Machine Learning Research}, pages 1041--1051. PMLR, 01--05 Aug 2022.
\newblock URL \url{https://proceedings.mlr.press/v180/kumar22a.html}.

\bibitem[Lakshminarayanan et~al.(2017)Lakshminarayanan, Pritzel, and Blundell]{deep_ensemble}
B.~Lakshminarayanan, A.~Pritzel, and C.~Blundell.
\newblock Simple and scalable predictive uncertainty estimation using deep ensembles.
\newblock In I.~Guyon, U.~V. Luxburg, S.~Bengio, H.~Wallach, R.~Fergus, S.~Vishwanathan, and R.~Garnett, editors, \emph{Advances in Neural Information Processing Systems}, volume~30. Curran Associates, Inc., 2017.

\bibitem[L{\'{e}}cuyer et~al.(2019)L{\'{e}}cuyer, Atlidakis, Geambasu, Hsu, and Jana]{lecuyer_certified_robust}
M.~L{\'{e}}cuyer, V.~Atlidakis, R.~Geambasu, D.~Hsu, and S.~Jana.
\newblock Certified robustness to adversarial examples with differential privacy.
\newblock In \emph{{IEEE} Symposium on Security and Privacy, {SP}}, pages 656--672. {IEEE}, 2019.
\newblock \doi{10.1109/SP.2019.00044}.
\newblock URL \url{https://doi.org/10.1109/SP.2019.00044}.

\bibitem[Lee et~al.(2019)Lee, Yuan, Chang, and Jaakkola]{tight_certificate_l0}
G.-H. Lee, Y.~Yuan, S.~Chang, and T.~Jaakkola.
\newblock Tight certificates of adversarial robustness for randomly smoothed classifiers.
\newblock In H.~Wallach, H.~Larochelle, A.~Beygelzimer, F.~d\textquotesingle Alch\'{e}-Buc, E.~Fox, and R.~Garnett, editors, \emph{Advances in Neural Information Processing Systems}, volume~32. Curran Associates, Inc., 2019.
\newblock URL \url{https://proceedings.neurips.cc/paper_files/paper/2019/file/fa2e8c4385712f9a1d24c363a2cbe5b8-Paper.pdf}.

\bibitem[Leino et~al.(2021)Leino, Wang, and Fredrikson]{Globally_robust_leino21a}
K.~Leino, Z.~Wang, and M.~Fredrikson.
\newblock Globally-robust neural networks.
\newblock In M.~Meila and T.~Zhang, editors, \emph{Proceedings of the 38th International Conference on Machine Learning}, volume 139 of \emph{Proceedings of Machine Learning Research}, pages 6212--6222. PMLR, 18--24 Jul 2021.
\newblock URL \url{https://proceedings.mlr.press/v139/leino21a.html}.

\bibitem[Levine et~al.(2019)Levine, Singla, and Feizi]{levine}
A.~Levine, S.~Singla, and S.~Feizi.
\newblock Certifiably robust interpretation in deep learning, 2019.
\newblock URL \url{https://arxiv.org/abs/1905.12105}.

\bibitem[Levine and Feizi(2021)]{levinde_l1}
A.~J. Levine and S.~Feizi.
\newblock Improved, deterministic smoothing for l\_1 certified robustness.
\newblock In M.~Meila and T.~Zhang, editors, \emph{Proceedings of the 38th International Conference on Machine Learning}, volume 139 of \emph{Proceedings of Machine Learning Research}, pages 6254--6264. PMLR, 18--24 Jul 2021.

\bibitem[Li et~al.(2019)Li, Chen, Wang, and Carin]{NEURIPS2019_certified}
B.~Li, C.~Chen, W.~Wang, and L.~Carin.
\newblock Certified adversarial robustness with additive noise.
\newblock In H.~Wallach, H.~Larochelle, A.~Beygelzimer, F.~d\textquotesingle Alch\'{e}-Buc, E.~Fox, and R.~Garnett, editors, \emph{Advances in Neural Information Processing Systems}, volume~32. Curran Associates, Inc., 2019.
\newblock URL \url{https://proceedings.neurips.cc/paper_files/paper/2019/file/335cd1b90bfa4ee70b39d08a4ae0cf2d-Paper.pdf}.

\bibitem[Li et~al.(2023)Li, Xie, and Li]{SP10179303}
L.~Li, T.~Xie, and B.~Li.
\newblock Sok: Certified robustness for deep neural networks.
\newblock In \emph{IEEE Symposium on Security and Privacy (SP)}, pages 1289--1310, 2023.
\newblock \doi{10.1109/SP46215.2023.10179303}.

\bibitem[Maag and Fischer(2024)]{Maag2024}
K.~Maag and A.~Fischer.
\newblock Uncertainty-based detection of adversarial attacks in semantic segmentation.
\newblock In \emph{Proceedings of the 19th International Joint Conference on Computer Vision, Imaging and Computer Graphics Theory and Applications}, pages 37--46. SciTePress, 2024.
\newblock \doi{10.5220/0012303500003660}.

\bibitem[MacKay(1992)]{Mackay1992}
D.~J.~C. MacKay.
\newblock A practical bayesian framework for backpropagation networks.
\newblock \emph{Neural Computation}, 4\penalty0 (3):\penalty0 448--472, 1992.

\bibitem[Maddox et~al.(2019)Maddox, Izmailov, Garipov, Vetrov, and Wilson]{SWAG}
W.~J. Maddox, P.~Izmailov, T.~Garipov, D.~P. Vetrov, and A.~G. Wilson.
\newblock A simple baseline for bayesian uncertainty in deep learning.
\newblock In H.~Wallach, H.~Larochelle, A.~Beygelzimer, F.~d\textquotesingle Alch\'{e}-Buc, E.~Fox, and R.~Garnett, editors, \emph{Advances in Neural Information Processing Systems}, volume~32. Curran Associates, Inc., 2019.
\newblock URL \url{https://proceedings.neurips.cc/paper_files/paper/2019/file/118921efba23fc329e6560b27861f0c2-Paper.pdf}.

\bibitem[Meunier et~al.(2022)Meunier, Delattre, Araujo, and Allauzen]{pmlr-v162-meunier22a}
L.~Meunier, B.~J. Delattre, A.~Araujo, and A.~Allauzen.
\newblock A dynamical system perspective for {L}ipschitz neural networks.
\newblock In K.~Chaudhuri, S.~Jegelka, L.~Song, C.~Szepesvari, G.~Niu, and S.~Sabato, editors, \emph{Proceedings of the 39th International Conference on Machine Learning}, volume 162 of \emph{Proceedings of Machine Learning Research}, pages 15484--15500. PMLR, 17--23 Jul 2022.
\newblock URL \url{https://proceedings.mlr.press/v162/meunier22a.html}.

\bibitem[Mohapatra et~al.(2020)Mohapatra, Ko, Weng, Chen, Liu, and Daniel]{higher_order_robustness}
J.~Mohapatra, C.-Y. Ko, T.-W. Weng, P.-Y. Chen, S.~Liu, and L.~Daniel.
\newblock Higher-order certification for randomized smoothing.
\newblock In H.~Larochelle, M.~Ranzato, R.~Hadsell, M.~Balcan, and H.~Lin, editors, \emph{Advances in Neural Information Processing Systems}, volume~33, pages 4501--4511. Curran Associates, Inc., 2020.
\newblock URL \url{https://proceedings.neurips.cc/paper_files/paper/2020/file/300891a62162b960cf02ce3827bb363c-Paper.pdf}.

\bibitem[Mueller et~al.(2023)Mueller, Eckert, Fischer, and Vechev]{mueller2023certified}
M.~N. Mueller, F.~Eckert, M.~Fischer, and M.~Vechev.
\newblock Certified training: Small boxes are all you need.
\newblock In \emph{The Eleventh International Conference on Learning Representations}, 2023.

\bibitem[Nichol and Dhariwal(2021)]{improved_diff_cifar10}
A.~Q. Nichol and P.~Dhariwal.
\newblock Improved denoising diffusion probabilistic models.
\newblock In M.~Meila and T.~Zhang, editors, \emph{Proceedings of the 38th International Conference on Machine Learning}, volume 139 of \emph{Proceedings of Machine Learning Research}, pages 8162--8171. PMLR, 18--24 Jul 2021.
\newblock URL \url{https://proceedings.mlr.press/v139/nichol21a.html}.

\bibitem[Pang et~al.(2018{\natexlab{a}})Pang, Du, Dong, and Zhu]{robust_detection_2018}
T.~Pang, C.~Du, Y.~Dong, and J.~Zhu.
\newblock Towards robust detection of adversarial examples.
\newblock In S.~Bengio, H.~Wallach, H.~Larochelle, K.~Grauman, N.~Cesa-Bianchi, and R.~Garnett, editors, \emph{Advances in Neural Information Processing Systems}, volume~31. Curran Associates, Inc., 2018{\natexlab{a}}.
\newblock URL \url{https://proceedings.neurips.cc/paper_files/paper/2018/file/e0f7a4d0ef9b84b83b693bbf3feb8e6e-Paper.pdf}.

\bibitem[Pang et~al.(2018{\natexlab{b}})Pang, Du, Dong, and Zhu]{towards_robust_detection}
T.~Pang, C.~Du, Y.~Dong, and J.~Zhu.
\newblock Towards robust detection of adversarial examples.
\newblock In S.~Bengio, H.~Wallach, H.~Larochelle, K.~Grauman, N.~Cesa-Bianchi, and R.~Garnett, editors, \emph{Advances in Neural Information Processing Systems}, volume~31. Curran Associates, Inc., 2018{\natexlab{b}}.
\newblock URL \url{https://proceedings.neurips.cc/paper_files/paper/2018/file/e0f7a4d0ef9b84b83b693bbf3feb8e6e-Paper.pdf}.

\bibitem[Prach and Lampert(2022)]{almost_orthogonal_2022}
B.~Prach and C.~H. Lampert.
\newblock Almost-orthogonal layers for efficient general-purpose lipschitz networks.
\newblock In \emph{Computer Vision – ECCV 2022: 17th European Conference, Tel Aviv, Israel, October 23–27, 2022, Proceedings, Part XXI}, page 350–365, Berlin, Heidelberg, 2022. Springer-Verlag.
\newblock ISBN 978-3-031-19802-1.
\newblock \doi{10.1007/978-3-031-19803-8_21}.
\newblock URL \url{https://doi.org/10.1007/978-3-031-19803-8_21}.

\bibitem[Rony et~al.(2019)Rony, Hafemann, Oliveira, Ayed, Sabourin, and Granger]{DDN}
J.~Rony, L.~G. Hafemann, L.~S. Oliveira, I.~B. Ayed, R.~Sabourin, and E.~Granger.
\newblock Decoupling direction and norm for efficient gradient-based {L2} adversarial attacks and defenses.
\newblock In \emph{{IEEE} Conference on Computer Vision and Pattern Recognition, {CVPR} 2019, Long Beach, CA, USA, June 16-20, 2019}, pages 4322--4330. Computer Vision Foundation / {IEEE}, 2019.
\newblock \doi{10.1109/CVPR.2019.00445}.
\newblock URL \url{http://openaccess.thecvf.com/content\_CVPR\_2019/html/Rony\_Decoupling\_Direction\_and\_Norm\_for\_Efficient\_Gradient-Based\_L2\_Adversarial\_Attacks\_CVPR\_2019\_paper.html}.

\bibitem[Salman et~al.(2019)Salman, Li, Razenshteyn, Zhang, Zhang, Bubeck, and Yang]{Salman}
H.~Salman, J.~Li, I.~Razenshteyn, P.~Zhang, H.~Zhang, S.~Bubeck, and G.~Yang.
\newblock Provably robust deep learning via adversarially trained smoothed classifiers.
\newblock In H.~Wallach, H.~Larochelle, A.~Beygelzimer, F.~d\textquotesingle Alch\'{e}-Buc, E.~Fox, and R.~Garnett, editors, \emph{Advances in Neural Information Processing Systems}, volume~32. Curran Associates, Inc., 2019.
\newblock URL \url{https://proceedings.neurips.cc/paper/2019/file/3a24b25a7b092a252166a1641ae953e7-Paper.pdf}.

\bibitem[Schuchardt and G{\"u}nnemann(2022)]{schuchardt2022invarianceaware}
J.~Schuchardt and S.~G{\"u}nnemann.
\newblock Invariance-aware randomized smoothing certificates.
\newblock In A.~H. Oh, A.~Agarwal, D.~Belgrave, and K.~Cho, editors, \emph{Advances in Neural Information Processing Systems}, 2022.

\bibitem[Sheikholeslami et~al.(2021)Sheikholeslami, Lotfi, and Kolter]{sheikholeslami2021provably}
F.~Sheikholeslami, A.~Lotfi, and J.~Z. Kolter.
\newblock Provably robust classification of adversarial examples with detection.
\newblock In \emph{International Conference on Learning Representations}, 2021.
\newblock URL \url{https://openreview.net/forum?id=sRA5rLNpmQc}.

\bibitem[Sheikholeslami et~al.(2022)Sheikholeslami, Lin, Metzen, Zhang, and Kolter]{sheikholeslami2022denoised}
F.~Sheikholeslami, W.-Y. Lin, J.~H. Metzen, H.~Zhang, and J.~Z. Kolter.
\newblock Denoised smoothing with sample rejection for robustifying pretrained classifiers.
\newblock In \emph{Workshop on Trustworthy and Socially Responsible Machine Learning, NeurIPS 2022}, 2022.
\newblock URL \url{https://openreview.net/forum?id=i1lF1WqMw3j}.

\bibitem[Singh et~al.(2018)Singh, Gehr, Mirman, P\"{u}schel, and Vechev]{singh_ipb}
G.~Singh, T.~Gehr, M.~Mirman, M.~P\"{u}schel, and M.~Vechev.
\newblock Fast and effective robustness certification.
\newblock In S.~Bengio, H.~Wallach, H.~Larochelle, K.~Grauman, N.~Cesa-Bianchi, and R.~Garnett, editors, \emph{Advances in Neural Information Processing Systems}, volume~31. Curran Associates, Inc., 2018.
\newblock URL \url{https://proceedings.neurips.cc/paper_files/paper/2018/file/f2f446980d8e971ef3da97af089481c3-Paper.pdf}.

\bibitem[Singla and Feizi(2021)]{singla_skew_orth}
S.~Singla and S.~Feizi.
\newblock Skew orthogonal convolutions.
\newblock In \emph{Proceedings of the 38th International Conference on Machine Learning}, 2021.
\newblock URL \url{https://proceedings.mlr.press/v139/singla21a.html}.

\bibitem[Singla et~al.(2022)Singla, Singla, and Feizi]{singla2022improved}
S.~Singla, S.~Singla, and S.~Feizi.
\newblock Improved deterministic l2 robustness on {CIFAR}-10 and {CIFAR}-100.
\newblock In \emph{International Conference on Learning Representations}, 2022.
\newblock URL \url{https://openreview.net/forum?id=tD7eCtaSkR}.

\bibitem[Smith and Gal(2018)]{Smith2018}
L.~Smith and Y.~Gal.
\newblock Understanding measures of uncertainty for adversarial example detection.
\newblock In A.~Globerson and R.~Silva, editors, \emph{Proceedings of the Thirty-Fourth Conference on Uncertainty in Artificial Intelligence, {UAI}}, pages 560--569. {AUAI} Press, 2018.

\bibitem[Stutz et~al.(2020)Stutz, Hein, and Schiele]{Stutz_CCAT}
D.~Stutz, M.~Hein, and B.~Schiele.
\newblock Confidence-calibrated adversarial training: Generalizing to unseen attacks.
\newblock In H.~D. III and A.~Singh, editors, \emph{Proceedings of the 37th International Conference on Machine Learning}, volume 119 of \emph{Proceedings of Machine Learning Research}, pages 9155--9166. PMLR, 13--18 Jul 2020.

\bibitem[Sun et~al.(2021)Sun, Mehra, Kailkhura, Chen, Hendrycks, Hamm, and Mao]{AdvRobustmeetsOOD-sun}
J.~Sun, A.~Mehra, B.~Kailkhura, P.~Chen, D.~Hendrycks, J.~Hamm, and Z.~M. Mao.
\newblock Certified adversarial defenses meet out-of-distribution corruptions: Benchmarking robustness and simple baselines.
\newblock \emph{CoRR}, abs/2112.00659, 2021.
\newblock URL \url{https://arxiv.org/abs/2112.00659}.

\bibitem[Szegedy et~al.(2014)Szegedy, Zaremba, Sutskever, Bruna, Erhan, Goodfellow, and Fergus]{Szegedy_advattacks}
C.~Szegedy, W.~Zaremba, I.~Sutskever, J.~Bruna, D.~Erhan, I.~J. Goodfellow, and R.~Fergus.
\newblock Intriguing properties of neural networks.
\newblock In Y.~Bengio and Y.~LeCun, editors, \emph{2nd International Conference on Learning Representations, {ICLR}}, 2014.

\bibitem[Trockman and Kolter(2021)]{orth_cayley_2021}
A.~Trockman and J.~Z. Kolter.
\newblock Orthogonalizing convolutional layers with the cayley transform.
\newblock In \emph{9th International Conference on Learning Representations, {ICLR} 2021, Virtual Event, Austria, May 3-7, 2021}. OpenReview.net, 2021.
\newblock URL \url{https://openreview.net/forum?id=Pbj8H\_jEHYv}.

\bibitem[Tsuzuku et~al.(2018)Tsuzuku, Sato, and Sugiyama]{lipschitz_margin_training_2018}
Y.~Tsuzuku, I.~Sato, and M.~Sugiyama.
\newblock Lipschitz-margin training: Scalable certification of perturbation invariance for deep neural networks.
\newblock In S.~Bengio, H.~Wallach, H.~Larochelle, K.~Grauman, N.~Cesa-Bianchi, and R.~Garnett, editors, \emph{Advances in Neural Information Processing Systems}, volume~31. Curran Associates, Inc., 2018.
\newblock URL \url{https://proceedings.neurips.cc/paper_files/paper/2018/file/485843481a7edacbfce101ecb1e4d2a8-Paper.pdf}.

\bibitem[Ugare et~al.(2024)Ugare, Suresh, Banerjee, Singh, and Misailovic]{ugare2024incremental}
S.~Ugare, T.~Suresh, D.~Banerjee, G.~Singh, and S.~Misailovic.
\newblock Incremental randomized smoothing certification.
\newblock In \emph{The Twelfth International Conference on Learning Representations}, 2024.
\newblock URL \url{https://openreview.net/forum?id=SdeAPV1irk}.

\bibitem[Wald et~al.(2021)Wald, Feder, Greenfeld, and Shalit]{NEURIPS2021_118bd558}
Y.~Wald, A.~Feder, D.~Greenfeld, and U.~Shalit.
\newblock On calibration and out-of-domain generalization.
\newblock In M.~Ranzato, A.~Beygelzimer, Y.~Dauphin, P.~Liang, and J.~W. Vaughan, editors, \emph{Advances in Neural Information Processing Systems}, volume~34, pages 2215--2227. Curran Associates, Inc., 2021.
\newblock URL \url{https://proceedings.neurips.cc/paper_files/paper/2021/file/118bd558033a1016fcc82560c65cca5f-Paper.pdf}.

\bibitem[Wightman(2019)]{rw2019timm}
R.~Wightman.
\newblock Pytorch image models.
\newblock \url{https://github.com/rwightman/pytorch-image-models}, 2019.

\bibitem[Xu et~al.(2023)Xu, Xiong, and Bhattacharyya]{Xu2023}
J.~Xu, Z.~Xiong, and S.~P. Bhattacharyya.
\newblock Pidnet: A real-time semantic segmentation network inspired by pid controllers.
\newblock In \emph{IEEE/CVF Conference on Computer Vision and Pattern Recognition (CVPR)}, pages 19529--19539. IEEE Computer Society, 2023.
\newblock \doi{10.1109/CVPR52729.2023.01871}.

\bibitem[Xu et~al.(2018)Xu, Evans, and Qi]{Feature_squeeze}
W.~Xu, D.~Evans, and Y.~Qi.
\newblock Feature squeezing: Detecting adversarial examples in deep neural networks.
\newblock In \emph{25th Annual Network and Distributed System Security Symposium, {NDSS} 2018, San Diego, California, USA, February 18-21, 2018}. The Internet Society, 2018.
\newblock URL \url{https://www.ndss-symposium.org/wp-content/uploads/2018/02/ndss2018\_03A-4\_Xu\_paper.pdf}.

\bibitem[Yang et~al.(2020)Yang, Duan, Hu, Salman, Razenshteyn, and Li]{RS_of_all_shapes}
G.~Yang, T.~Duan, J.~E. Hu, H.~Salman, I.~Razenshteyn, and J.~Li.
\newblock Randomized smoothing of all shapes and sizes.
\newblock In \emph{Proceedings of the 35th International Conference on Machine Learning, {ICML}}. JMLR.org, 2020.

\bibitem[Ye et~al.(2023)Ye, Ma, Cao, and Tang]{pmlr-v216-ye23a}
W.~Ye, Y.~Ma, X.~Cao, and K.~Tang.
\newblock Mitigating transformer overconfidence via {L}ipschitz regularization.
\newblock In R.~J. Evans and I.~Shpitser, editors, \emph{Proceedings of the Thirty-Ninth Conference on Uncertainty in Artificial Intelligence}, volume 216 of \emph{Proceedings of Machine Learning Research}, pages 2422--2432. PMLR, 2023.

\bibitem[Zhai et~al.(2020)Zhai, Dan, He, Zhang, Gong, Ravikumar, Hsieh, and Wang]{Zhai2020MACER}
R.~Zhai, C.~Dan, D.~He, H.~Zhang, B.~Gong, P.~Ravikumar, C.-J. Hsieh, and L.~Wang.
\newblock Macer: Attack-free and scalable robust training via maximizing certified radius.
\newblock In \emph{International Conference on Learning Representations}, 2020.

\bibitem[Zhu et~al.(2024)Zhu, Chen, Xie, Li, Zhang, Xue, Tian, Zheng, and Chen]{boosting_ood_feature}
Y.~Zhu, Y.~Chen, C.~Xie, X.~Li, R.~Zhang, H.~Xue, X.~Tian, B.~Zheng, and Y.~Chen.
\newblock Boosting out-of-distribution detection with typical features.
\newblock In \emph{Proceedings of the 36th International Conference on Neural Information Processing Systems}, NIPS '22, Red Hook, NY, USA, 2024. Curran Associates Inc.
\newblock ISBN 9781713871088.

\end{thebibliography}

\newpage
\appendix

\subsection{Experimental setup}\label{sec:appendix_setup}
All our experiments can be reproduced by using a Nvidia A40 GPU. Note that experiments with SmoothViT needed the full capacity with a batch size of 50, which means we predict during evaluation 50 noisy versions of the same input at once.

\subsubsection{Datasets and models}
As stated in the main text, we have not retrained any of the models but used existing frameworks. How to set up these experiments can be found in the \textit{readme} in the code base.
For the models \textit{SmoothCCAT} and \textit{SmoothViT} we used the setting of~\citet{carlini2023certified}, which implies that we used the class unconditioned diffusion models by~\citet{improved_diff_cifar10} for SmoothCCAT on CIFAR10 and \citet{diff_imagenet} for SmoothViT on ImageNet as the denoiser. The numbers behind the names display the amount of Gaussian noise $\mathcal{N}(0, \sigma^2)$ added during training and evaluation.

\subsubsection{Thresholds and accuracies}
In the main paper, we state that we set the threshold  
$\theta$ such that the accuracy on the respective validation set is at most reduced by 1\%. This was calculated in the following way: 

We first set $\theta = \sup(\mathcal{U})$ and draw 1,000 noisy samples for each image and predict the labels of those (which essentially corresponds to \textsc{SampleUnderNoise}$(f, x, \sigma, 1000)$). Without conducting a hypothesis test, we output as a prediction the class with the most class counts in \texttt{counts}, 
i.e., the majority class, and report the resulting accuracy over the validation set. 
We then gradually increase $\theta$ in equidistant 1,000 steps from 0 to 1 for the margin, from 0.1 to 1 for prediction confidence and from $-\log( \text{number of classes})$ to 0 for the entropy 
and repeat the above-described 
procedure of calculating the resulting accuracy. 
If the accuracy drops below 99\% of the original accuracy with $ \theta =\sup(\mathcal{U})$ we stop and report the last $\theta$ for which this was not the case. The thresholds are given in Table~\ref{tab:theta}. Note, that due to the definition of an uncertainty-equipped classifier the uncertainty functions 
corresponding to prediction and margin are actually their negative values. 
In Table~\ref{tab:accuracie} we report the resulting test set accuracies and observe that it is oftentimes even below the 1\% loss we accepted on the validation set.
Also note, that these accuracies are higher than the ones reported in the main paper, as for these results there is not a statistical test conducted to lead to abstaining from predicting.

\begin{table}[tb]
    \centering
    \begin{tabular}{c l c c c c}
    \toprule
     &  & Prediction  & Entropy & Margin  \\
     \midrule
     \multirow{4}{*}{\begin{sideways}CIFAR10\end{sideways}}  
     &RandSmooth 0.25 & 0.39 & -1.545& 0.109\\
     &SmoothAdv 0.25 & 0.421 & -1.474 & 0.124\\
     &SmoothCCAT 0.25 & 0.449 & -1.547& 0.257\\
     &SmoothCCAT 0.3 & 0.490 & -1.504& 0.242\\
     \midrule
     \multirow{5}{*}{\begin{sideways}ImageNet\end{sideways}}     
     & RandSmooth 0.25& 0.159 & -3.875 & 0.045\\
     & SmoothAdv 0.25 &0.159& -4.283& 0.027\\
      \cdashline{2-5}
     & RandSmooth 0.5& 0.113 & -4.594& 0.028\\
     & SmoothAdv 0.5 & 0.125& -4.304& 0.024\\
     & SmoothViT 0.5 & 0.160& -4.704 & 0.047 \\   
    \bottomrule 
    \end{tabular}
    \caption{\textbf{Values of the  threshold  $\theta$ } such that the benign accuracy of the smoothed classifier on the validation set does not loose more than 1\% accuracy when using uncertainty. 
    }
    \label{tab:theta}
\end{table}

\begin{table}[tb]
    \centering
    \begin{tabular}{c l c c c c }
    \toprule
     &  &Original  &Prediction& Entropy& Margin \\
     \midrule
     \multirow{4}{*}{\begin{sideways}CIFAR10\end{sideways}}  
     & RandSmooth 0.25 & 77.89 & 76.89 & 77.78 & 76.78  \\
     &SmoothAdv 0.25 & 79.90& 77.78 &77.78 &77.78\\
     &SmoothCCAT 0.25 & 79.00 &78.11 &77.89&77.67\\
     &SmoothCCAT 0.3 &74.33 &73.67&73.89&73.56\\
     \midrule
     \multirow{5}{*}{\begin{sideways}ImageNet\end{sideways}}     
     & RandSmooth 0.25& 70.5 & 70.1 & 69.9 & 69.3 \\
     & SmoothAdv 0.25 &67.5 &65.2 & 66.5 & 66.6\\
    \cdashline{2-6}
     &RandSmooth 0.5 &62.4 & 62.1 & 62.3 & 61.9\\
     &SmoothAdv 0.5& 60.4 & 59.1 & 59.3 & 59.3\\
     &SmoothViT 0.5 &63.7 & 61.9 & 62.2&62.0\\
    \bottomrule 
    \end{tabular}
    \caption{\textbf{Test set accuracies of the smoothed classifier} based on 1,000 samples for different uncertainty functions and thresholds as reported in Table~\ref{tab:theta}. `Original' corresponds to  the smoothed classifier before equipping it with an uncertainty function (or equivalently when  $\theta=\sup(\mathcal{U})).$}
    \label{tab:accuracie}
\end{table}

\subsection{Additional experimental results}\label{sec:appendix_experiments}

In this section, we provide additional experimental results. 

\paragraph{Number of neighboring classes for ImageNet}
We first provide the number of different classes predicted for noisy versions of each image (i.e. the non-zero entries of the \textsc{counts} vector provided by \textsc{SampleUnderNoise}$(f, x, \sigma, 1000)$) from the test set for the models trained on ImageNet, when using the entropy as the uncertainty function in Figure~\ref{fig:classes_imagenet}. Again, as displayed in the main paper, we observe that on average over the used models around 50\% of the noisy samples fall not only into one but multiple classes.

\begin{figure}[tb]
    \centering
    \includegraphics[width=0.48\textwidth]{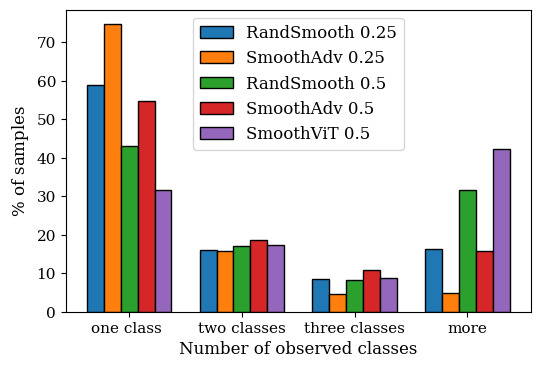}
    \caption{\textbf{Number of different classes} observed when using 1,000 noisy samples per image on ImageNet.}
    \label{fig:classes_imagenet}
\end{figure}

\paragraph{Complementary experiments for \Cref{subsec:ood}: Investigating the OOD case}
\label{sec:app_ood}
Last but not least, we present the remaining results for the other models on the OOD datasets. For ImageNet we used the ImageNet-O dataset~\citep{hendrycks2021nae} as OOD data, where the ViT-L/32-384 model\footnote{We are using this ViT classifier in our experiments for SmoothViT.} was found to excel on OOD detection in a recent paper~\citep{galil2023framework}. 
However, this paper also finds that ImageNet-O can be considered a tough OOD dataset. Since our analysis is based on noisy images small details which make these samples OOD might be destroyed. Indeed we obverse, that there are hardly any differences between OOD and ID for the other models on ImageNet. Only for SmoothViT we find more often that a fraction of noisy samples are predicted as uncertain, which results in more samples with no empty uncertainty mass, e.i. $p_{v_{\theta}}(x)\neq 0$ for the images on OOD than on ID data.

\begin{figure*}[t]
    \centering
    \subfloat{\includegraphics[width= 0.35\textwidth]{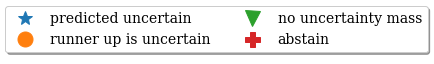}}
    \quad
\addtocounter{subfigure}{-1} 
\subfloat[SmoothAdv 0.25]{\includegraphics[width= 0.32\textwidth]{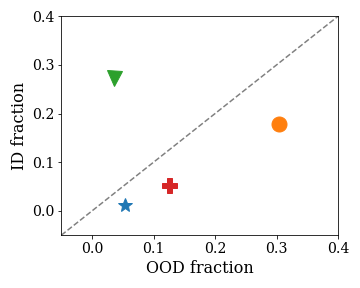}}
    \subfloat[SmoothCCAT 0.25]{\includegraphics[width= 0.32\textwidth]{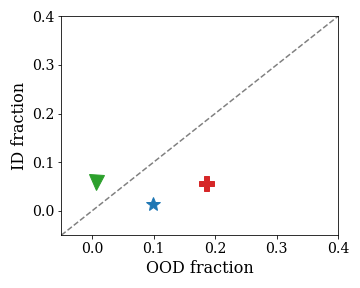}}
    \subfloat[SmoothCCAT 0.3 ]{\includegraphics[width= 0.32\textwidth]{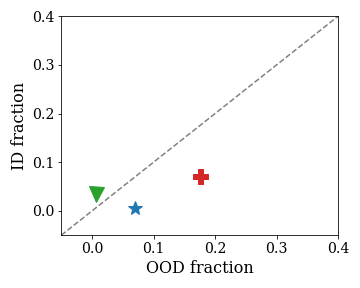}}
    \caption{
    \textbf{Comparing the smoothed classifier's performance on in distribution (ID) to OOD data:} Fraction of samples in the ID and OOD dataset where the uncertain class is predicted, the runner-up class is uncertain, no runner-up class was identified, the uncertain class has no probability mass, i.e. no noisy samples were classified as uncertain and last, where we abstain from making a prediction. Models trained on
    CIFAR10. }
\end{figure*}
    
\begin{figure*}[t]
\centering
\subfloat{\includegraphics[width= 0.35\textwidth]{figs/legende.png}}
    \quad
    \addtocounter{subfigure}{-1} 
    \subfloat[RandSmooth 0.25]{\includegraphics[width= 0.32\textwidth]{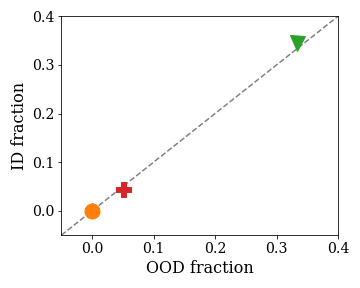}}
    \subfloat[RandSmooth 0.5]{\includegraphics[width= 0.32\textwidth]{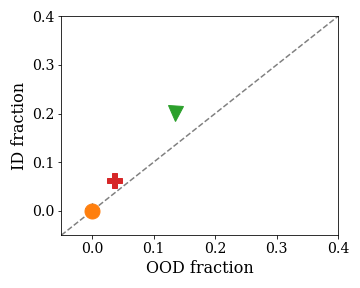}}
    \quad
    \subfloat[SmoothAdv 0.25 ]{\includegraphics[width= 0.32\textwidth]{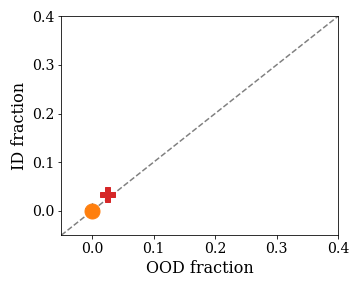}}
    \subfloat[SmoothAdv 0.5]{\includegraphics[width= 0.32\textwidth]{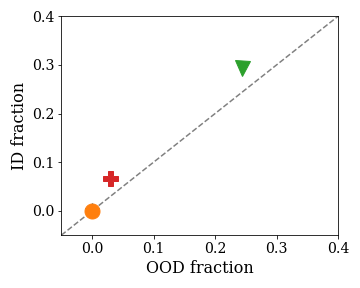}}
    \subfloat[SmoothViT 0.5 ]{\includegraphics[width= 0.32\textwidth]{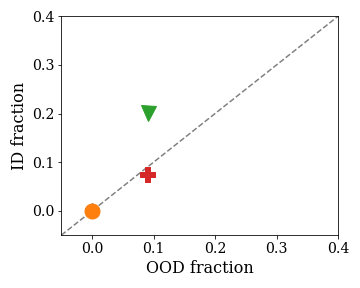}}
    \caption{
    \textbf{Comparing the smoothed classifier's performance on in distribution (ID) to OOD data:} Fraction of samples in the ID and OOD dataset where the uncertain class is predicted, the runner-up class is uncertain, no runner-up class was identified, the uncertain class has no probability mass, i.e. no noisy samples were classified as uncertain and last, where we abstain from making a prediction.  Models trained on ImageNet.}
    \label{fig:id_ood_fraction}
\end{figure*}

\end{document}